\documentclass[conference]{IEEEtran}
\IEEEoverridecommandlockouts
\usepackage{cite}
\usepackage{amsthm,amsmath,amssymb}
\usepackage{algorithmic}
\usepackage{algorithm}
\usepackage{graphicx} 
\usepackage{xcolor}
\usepackage{type1cm}
\newtheorem{proposition}{Proposition} 
\newtheorem{theorem}{Theorem}
\newtheorem{lemma}{Lemma}
\newtheorem{collorary}{Collorary}
\newtheorem{definition}{\textbf{Definition}} 
 
\def\BibTeX{{\rm B\kern-.05em{\sc i\kern-.025em b}\kern-.08em
    T\kern-.1667em\lower.7ex\hbox{E}\kern-.125emX}}
\begin{document} 

\title{Gathering of seven autonomous mobile robots \\on triangular grids}

\author{\IEEEauthorblockN{Masahiro Shibata}
\IEEEauthorblockA{\textit{Graduate School of Computer Science and Systems Engineering} \\ 
\textit{Kyushu Institute of Technology}\\
Fukuoka, Japan \\
shibata@cse.kyutech.ac.jp}
\and
\IEEEauthorblockN{Masaki Ohyabu}
\IEEEauthorblockA{\textit{Graduate School of Computer Science and Engineering} \\
\textit{Nagoya Institute of Technology}\\
Aichi, Japan \\
oyabu@moss.elcom.nitech.ac.jp}
\and
\IEEEauthorblockN{Yuichi Sudo}
\IEEEauthorblockA{\textit{Graduate School of Information Science and Technology} \\
\textit{Osaka University}\\
Osaka, Japan \\
y-sudou@ist.osaka-u.ac.jp}
\and
\IEEEauthorblockN{Junya Nakamura}
\IEEEauthorblockA{\textit{Information and Media Center} \\
\textit{Toyohashi University of Technology}\\
Aichi, Japan \\
junya@imc.tut.ac.jp}
\and
\IEEEauthorblockN{Yonghwan Kim}
\IEEEauthorblockA{\textit{Graduate School of Computer Science and Engineering} \\
\textit{Nagoya Institute of Technology}\\
Aichi, Japan  \\
kim@nitech.ac.jp}
\and
\IEEEauthorblockN{Yoshiaki Katayama}
\IEEEauthorblockA{\textit{Graduate School of Computer Science and Engineering} \\
	\textit{Nagoya Institute of Technology}\\
Aichi, Japan \\
katayama@nitech.ac.jp}

}

\maketitle

\begin{abstract}
In this paper, we consider the gathering problem of seven autonomous mobile  robots 
on triangular grids.
The gathering problem requires that, starting  from any connected initial configuration
where a subgraph induced by all robot nodes (nodes where a robot exists) constitutes one connected graph,
robots  reach a configuration such that 
the maximum distance between two robots is minimized.
For the case of seven robots, 
gathering is achieved when one robot has six adjacent robot nodes
(they form a shape like a hexagon).
In this paper, we aim to clarify the relationship between 
the capability of robots and the solvability of 
gathering 
on a triangular grid.
In particular, we focus on visibility range of robots. 
To discuss the solvability of the problem in terms of the visibility range, 
we consider strong assumptions except for visibility range. 
Concretely, we assume that robots are fully synchronous and 
they agree on the direction and orientation of the $x$-axis, and 
chirality in the triangular grid.
In this setting, we first consider the weakest assumption about visibility range, i.e.,  
robots with visibility range 1. 
In this case, we show that there exists no collision-free algorithm to solve the gathering problem.
Next, we extend the visibility range to 2.
In this case, we show that our algorithm can solve the problem from any connected initial configuration.
Thus, the proposed algorithm is optimal in terms of visibility range.

\end{abstract}

\begin{IEEEkeywords}
distributed system, mobile robot, gathering problem, triangular grid
\end{IEEEkeywords}

\section{Introduction}
\label{intro}
\subsection{Background}
Studies for (autonomous) mobile robot systems have emerged recently in the field of Distributed Computing. 
Robots aim to achieve some tasks with limited capabilities. 
Most studies assume that robots are uniform  
(they execute the same algorithm and cannot be distinguished by their appearance) and
oblivious (they cannot remember their past actions). 
In addition, it is assumed that robots cannot communicate with other robots explicitly. 
Instead, the communication is done implicitly;
each robot can observe the positions of the other robots.

\subsection{Related work}

Since Suzuki and Yamashita presented the pioneering work~\cite{YamashitaSensei}, 
many problems have been studied. 
For example, the gathering problem, 
which requires all robots to meet at a non-predetermined single point,
has been studied in various environments.
In the two-dimensional Euclidean space (a.k.a., the continuous model),
Suzuki and Yamashita \cite{YamashitaSensei} showed that 
when robots are not fully synchronous, 
the deterministic gathering of two robots is impossible without additional assumptions. 
This impossibility result was generalized to an even number of robots initially located evenly 
at two positions by Courtieu et al.~\cite{courtieuipl}. 
By contrast, Dieudonn\'e and Petit~\cite{continuous1} showed that,
by adding the assumption that robots can count the exact number of robots at each position, 
an odd number of robots can gather from any initial position. 

The gathering problem in the discrete space (a.k.a., the graph model) has also been studied.
In the discrete space, robots 
stay at fixed positions (the nodes of the graph), 
and move from one position to the next position 
through edges of the graph.
For (square) grid graphs, D'Angelo et al.~\cite{gridGathering}  and 
Castenow et al.~\cite{gridGathering2} proposed algorithms to solve 
the gathering problem. 
For ring graphs, 
Klasing et al.~\cite{descrete1,descrete2} 
showed the existence of unsolvable initial configurations and 
proposed algorithms to solve the problem from some specific initial configurations.
D'Angelo et al. \cite{descrete3} proposed an algorithm to solve the problem from
any solvable initial configuration.
Stefano and Navarra~\cite{ringGatheringMoves} analyzed the required total number of robot moves to solve the gathering problem in rings. 

\if()
Izumi et al.~\cite{descrete4} provided a deterministic gathering algorithm 
under the assumption that 
initial configurations are non-symmetric or non-periodic, and that the number of robots is less than half of the number of nodes. 
For odd number of robots or odd number of nodes,
Kamei et al.~\cite{descrete5,descrete6} proposed gathering algorithms that also work when started from symmetric configurations. 
\fi

	As a variant of mobile robots, 
gathering of \textit{fat robots} is considered \cite{fat1,fat2,fatGrid}.
Each fat robot dominates a space of a unit disc.
There are several definitions of the gathering problem for fat robots, e.g., 
robots achieve gathering when 
(i) they form a connected configuration 
(each robot touches at least one other robot and 
all robots form one connected formation) or  
(ii) they  reach a configuration such that 
the maximum distance between two robots is minimized.
For both the definitions, a collision is not allowed.
Thus, introducing sizes gives several definitions of the gathering problem, 
which is an interesting point.
Czyzowicz et al. \cite{fat1} considered gathering of (i) for three or four fat robots 
in the continuous model, and 
Chrysovalandis et al. \cite{fat2}
studied gathering of (i) for arbitrary number of robots.
Ito et al. \cite{fatGrid} considered gathering of (ii) on discrete square grids.	

Recently, one of computational models for programmable matter, 
\textit{amoebot} has been introduced \cite{amoeFirst}.
Each amoebot moves on a triangular grid and occupies one or two adjacent nodes.
Each amoebot has a finite memory, limited visibility range, and ability to communicate with 
a robot staying at an adjacent node.
Several problems using amoebots have been considered, such as leader election \cite{amoeLeader2}, 
gathering \cite{amoeGathering}, and 
shape formation (or pattern formation) \cite{YamauchiSensei,shape2}.
Recall that while amoebots have finite memory and communication capability,
(standard) autonomous mobile robots have no memory or communication capability.
Hence, the mobile robot model is weaker than 
the amoebot model, and 
it is interesting to clarify solvability of problems between 
the mobile robot model and the amoebot model.
 
Meanwhile, when considering a discrete space,
a space filled by regular polygons is sometimes preferable because 
its simple structure helps to design an algorithm and 
to discuss the solvability of a problem among various robot models.
In addition, (i) only triangular,  square, and  hexagonal grids are 
discrete spaces filled by  regular polygons, 
(ii) gathering on a square gird has already been studied \cite{fatGrid}, and 
(iii) recently the amoebot model has been extensively studied on a triangular grid.
Hence, in this paper we consider  gathering of mobile robots on a triangular grid. 

\subsection{Our contribution}

In this paper, 
we consider the gathering problem of seven mobile robots
on triangular grids.
We say in this paper that 
gathering is achieved when robots reach a configuration such that 
the maximum distance between two robots is minimized.
For the case of seven robots, letting a \textit{robot node} be a node where a robot exists,
gathering is achieved when one robot has six adjacent robot nodes
like Fig.\,\ref{fig:example}.
This implies that 
 robots form a (filled) hexagon.
In this paper, we aim to clarify the relationship between 
the capability of robots and the solvability of the gathering problem on a triangular grid.
In particular, we focus on visibility range of robots. 
To discuss the solvability of the problem in terms of the visibility range, 
we consider strong assumptions except for visibility range. 
Concretely, we assume that robots are fully synchronous, and they 
agree on the direction and orientation of the $x$-axis, and 
chirality in the triangular grid.
In this setting, we first consider the weakest assumption about visibility range, i.e.,  
robots with visibility range 1. 
In this case, we show that there exists no collision-free algorithm to solve the gathering problem.
Next, we extend the visibility range to 2.
In this case, we show that our algorithm can solve the problem from any connected initial configuration.
Thus, the proposed algorithm is optimal in terms of visibility range.

\begin{figure}[t!]
		\centering 
		\includegraphics[scale=0.51]{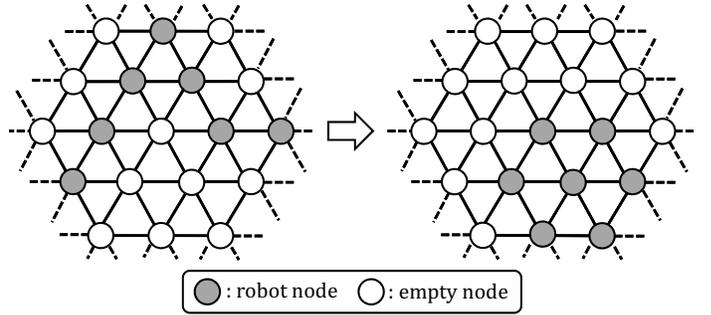}
		\caption{An example of the gathering problem.}
		\label{fig:example}
\end{figure}

\section{Preliminaries}
\subsection{System model}
An (infinite) triangular grid is an undirected graph $G=(V,E)$, where 
$V$ is the set of nodes and $E$ is the set of edges. 
The grid has one special node called \textit{origin}, and we denote it by $v_o$.
Each node $v_j\in V$ has six \textit{adjacent nodes}:
east ($v_E^j$ or E), southeast ($v_{\textit{SE}}^j$ or SE), 
southwest ($v_{\textit{SW}}^j$ or SW), west ($v_W^j$ or W), 
northwest ($v_{\textit{NW}}^j$ or NW), and northeast ($v_{\textit{NE}}^j$ or NE).
The axis including $v_o$ and $v_E^o$ (resp., $v_o$ and $v_\textit{NE}^o$)
is called the \textit{$x$-axis}
(resp., \textit{$y$-axis})\footnote{Although the origin, the $x$-axis, and the $y$-axis are terms of the coordinate system, 
	we use these terms for explanation.
	In the following, we use several terms of the coordinate system.}.
An example is given in Fig.\,\ref{fig:grid}.
In addition, 
a sequence of $k+1$ distinct nodes $(v_0,v_1,\dots,v_k)$ is called a \textit{path} with length $k$ if $\{v_i,v_{i+1}\} \in E$ for all $i \in [0,k-1]$.
The \textit{distance} between two nodes is defined as 
the length of the shortest path between  them.

In this paper, we consider seven mobile robots  and 
denote the robot set by $R =\{r_0, r_1,\ldots , r_6\}$.
Robots considered here have the following characteristics. 
Robots are \textit{uniform}, that is, 
they execute the same algorithm and cannot be distinguished by their appearance. 
Robots are \textit{oblivious}, that is, 
they have no persistent memory and cannot remember their past actions. 
Robots cannot communicate with other robots directly.
However, robots have limited visibility range and 
they can observe the positions of other robots within the range. 
This means that robots can communicate implicitly by their positions. 
We consider two problem settings about robots: 
\textit{robots with visibility range 1} and \textit{robots with visibility range 2}.
Robots with visibility range 1 can observe nodes within distance 1, that is, 
they can only observe their six adjacent nodes.
On the other hand, 
robots with visibility range 2 can observe nodes within distance 2 (eighteen nodes in total).
We assume that they are transparent, that is,
even if a robot $r_i$ and several robots exist on the same axis,
$r_i$ can observe all the robots on the axis within its visibility range.
Robots do not know the position of the origin, but 
they agree on the direction and orientation of the $x$-axis, and 
chirality (the orientation of axes, e.g., clockwise or counter-clockwise)
in the triangular gird.

\begin{figure}[t!]
			\centering
			\includegraphics[scale=0.475]{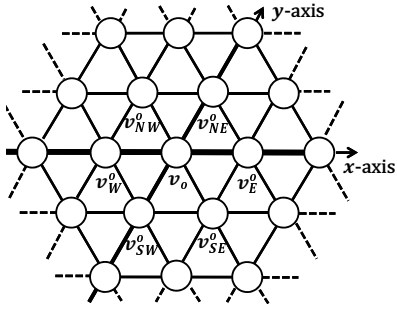}
			\caption{An example of a triangular grid.}
			\label{fig:grid}
\end{figure}

Each robot executes the algorithm by repeating \textit{Look-Compute-Move cycles}. 
At the beginning of each cycle, 
the robot observes 
positions of the other robots within its visibility range
(Look phase). 
According to the observation, 
the robot computes whether it moves to its adjacent node or 
stays at the current node (Compute phase). 
If the robot decides to move, it moves to the node by the end of the cycle (Move phase). 
Robots are fully synchronous (FSYNC), that is, 
all robots start every cycle simultaneously and 
execute each phase synchronously.
We assume that a \textit{collision} is not allowed during execution of the algorithm.
Here, a collision represents a situation such that 
two robots traverse the same edge from different directions
or several robots exist at the same node.
Concretely, the following three behaviors are not allowed:
(a) some robot $r_i$ (resp., $r_j$) staying at node $v_p$ (resp., $v_q$) moves to 
$v_q$ (resp., $v_p$),
(b) some robot $r_i$ staying at node $v_p$ remains at $v_p$ and 
robot $r_j$ staying at 
node $v_q$ moves to $v_p$, and
(c) several robots move to the same empty node.

A \textit{configuration} of the system is defined as the set of  locations of each robot.
Here, the location of a robot $r_i$ is   
defined as the position that 
(1) $r_i$ is currently staying at and 
(2) is represented as an intersection  of an axis parallel to the $x$-axis and 
an axis parallel to the $y$-axis.
Each axis $\textit{ax}$ is represented by 
(i) whether it is parallel to the $x$-axis or the $y$-axis and 
which direction it is far from the axis, and 
(ii) the number of axes between 
the axis including $v_o$ (i.e., the $x$-axis or the $y$-axis) and \textit{ax}.
However, robots do not know the position of $v_o$ and 
they cannot use information of (global) locations.
A node is called a \textit{robot node} if the node is occupied by a robot.
Otherwise, the node is called an \textit{empty node}.
We assume that the initial configuration is connected, that is, 
the subgraph of $G$ induced by the seven robot nodes is connected.
This assumption of connectivity is necessary because, 
if a configuration becomes unconnected and a robot $r$ has no adjacent robot node,
$r$ cannot know the direction to reconstruct a connected configuration
due to 
obliviousness, which implies that robots cannot achieve gathering.

When a robot executes a Look phase, it gets a \textit{view} of the system. 
A view of a robot is defined 
as the set of robot nodes within its visibility range.
For example, in Fig.\,\ref{fig:configuration},
a robot at node $v_j$ recognizes that 
nodes $v_E^j, v_\textit{SW}^j$, and $v_\textit{NE}^j$ are robot nodes
when its visibility range is 1 
and recognizes that nodes $v_k$ and $v_\ell$ are also robot nodes when its visibility range is 2.

\begin{figure}[t!]
			\centering
			\includegraphics[scale=0.475]{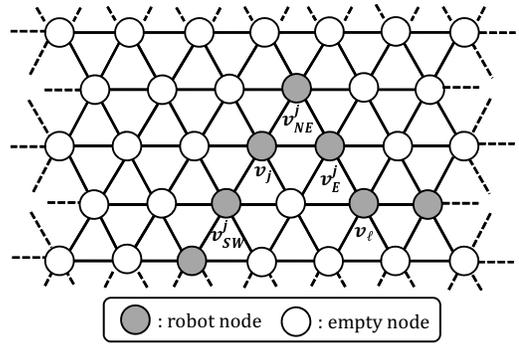}
			\caption{An example of a configuration.}
			\label{fig:configuration}
\end{figure}

\subsection{Gathering problem}
The gathering problem of mobile robots requires that  starting 
from any connected initial configuration, 
the robots terminate in  a configuration such that 
the maximum distance between two robot nodes is minimized.
In the case of seven robots, gathering is achieved when
one robot has six adjacent robot nodes
(Fig.\,\ref{fig:example}). 
Concretely, we define the problem as follows.

\begin{definition}
	A collision-free algorithm $\cal{A}$ solves the gathering problem of seven autonomous mobile robots  on a triangular grid if and only if
	the system reaches a configuration such that 
	one robot has six adjacent robot nodes and no robot moves thereafter, without a collision throughout the  execution of $\cal{A}$, 
\end{definition}

\section{Robots with visibility range 1}
In this section, for robot with visibility range 1,
we show that there exists no collision-free algorithm to solve 
the problem.

\begin{theorem}
	\label{theo:insolvability}
	For robots with visibility range 1, 
	there exists no collision-free algorithm to solve the gathering problem even in the fully synchronous (FSYNC) model.
\end{theorem}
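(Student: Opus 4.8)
The plan is to exploit the combination of range-$1$ visibility, obliviousness, and the agreement on orientation to reduce any candidate algorithm to a \emph{finite} object, and then to rule it out by a configuration-based case analysis. The key first step: since a robot knows the direction and orientation of the $x$-axis and the chirality but not its absolute position, and since it can only see which of its six neighbors are occupied, its decision cannot depend on anything except the subset of $\{\mathrm{E}, \mathrm{W}, \mathrm{NE}, \mathrm{NW}, \mathrm{SE}, \mathrm{SW}\}$ that is occupied. Hence any collision-free algorithm $\mathcal{A}$ is equivalent to a single function $f$ from the $2^{6}$ possible neighbor-patterns to the seven possible actions (stay, or move in one of the six directions), and two robots with the same pattern are forced to perform the same relative move. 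In particular, the whole execution from a fixed connected configuration is deterministic.

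Next I would extract the constraints that $f$ must satisfy. For termination, $f$ must map every view that occurs in the target hexagon to ``stay''; these are the all-six pattern of the center robot and the six ``three consecutive occupied'' wedge patterns of the ring robots. A useful sanity sub-step is a handshake/parity observation: the only pattern with all six neighbors occupied forces the configuration to be exactly the hexagon, so a non-hexagon fixed point would need all seven robots to show a wedge (degree-$3$) pattern, giving an odd degree sum $7\cdot 3=21$, which is impossible. This rules out a naive ``forced deadlock'' and shows the impossibility must come from the tension between progress and safety, not from symmetry (which the orientation assumption already destroys).

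The main argument then exhibits a specific connected initial configuration, most naturally the straight line of all seven robots along the $x$-axis. Here the five interior robots all have the identical view $\{\mathrm{E},\mathrm{W}\}$, hence all perform the same action $a=f(\{\mathrm{E},\mathrm{W}\})$, while the two endpoints have views $\{\mathrm{E}\}$ and $\{\mathrm{W}\}$. I would case split on $a$. If $a$ is a move, the five interior robots translate in parallel (they never collide with one another, since equal-view robots always move in parallel and thus can neither swap nor land on a common node), and I would show that the induced motion either drives a moving interior robot into the lagging endpoint (a type-(a) or type-(b) collision), or disconnects an endpoint and strands it --- and once a robot has no adjacent robot node it can never rejoin, by obliviousness, so gathering fails --- or else merely reshapes the seven robots into another linear or offset structure that keeps drifting and never attains the hexagon. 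If instead $a=\text{stay}$, the sandwiched interior robots are frozen, so the only robots that can ever move are those momentarily un-sandwiched; I would track the sequence of views created as the endpoints move and argue that forming the hexagon would eventually require some robot with view $\{\mathrm{E},\mathrm{W}\}$ to move, contradicting $a=\text{stay}$, so the execution is forced into a non-hexagon deadlock or a disconnection.

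The hardest part will be the $a=\text{stay}$ branch, and more generally the ``under-moving'' branches: there one cannot simply inspect a single step but must rule out \emph{every} possible gathering trajectory, since after each endpoint move new neighbor-patterns appear and $f$ may act arbitrarily on them. I expect to handle this with a monovariant --- for instance, showing that the robot at the middle of the line can never acquire a non-$\{\mathrm{E},\mathrm{W}\}$ view without first disconnecting the configuration, so it never moves and the enclosing shape never shrinks to diameter $2$. Making this case analysis genuinely exhaustive over all values of $f$ on the finitely many relevant patterns, while simultaneously respecting the collision-free and connectivity requirements, is the principal technical burden of the proof.
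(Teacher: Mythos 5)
Your setup is sound and in fact mirrors the paper's: the algorithm does reduce to a single function $f$ from occupied-neighbour patterns to actions, the paper's starting configuration is likewise a seven-robot line (along the NW--SE diagonal rather than the $x$-axis), and your observation that a robot seeing two opposite occupied neighbours is frozen is exactly the paper's Lemma~\ref{lem:intermediate}, proved by embedding such a robot in auxiliary seven-robot configurations in which every candidate move collides or disconnects. The problem is that everything after this point --- which is the entire content of the theorem --- is deferred. Once the interior is frozen, an endpoint with a single occupied neighbour SE has only two legal moves (SW or E), and the proof must show that \emph{whichever} of these $\mathcal{A}$ chooses, a cascade of roughly a dozen further views is forced to ``stay'', each via a bespoke seven-robot counterexample configuration, until one exhibits a single connected, non-gathered configuration in which every robot's view is provably mapped to ``stay''. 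The paper needs five nested cases and some twenty configurations to do this, and several of the eliminations rely on constructing \emph{livelocks} (two configurations that alternate forever, as in Figs.~\ref{fig:case2-1} and \ref{fig:case2-2}), a failure mode your ``deadlock or disconnection'' dichotomy for the stay branch does not cover. Declaring this exhaustive elimination ``the principal technical burden'' is accurate, but it means the proposal stops where the proof begins; it is not evident from your outline that the cascade actually closes.

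Two further specific concerns. First, your proposed monovariant --- that the middle robot of the line can never lose its $\{\mathrm{E},\mathrm{W}\}$ view without disconnection --- is neither established nor how the contradiction actually arises: the paper's terminal configuration (Fig.~\ref{fig:slash}~(b)) is a \emph{short} diagonal line whose robots have views $\{\mathrm{SE}\}$, $\{\mathrm{NW},\mathrm{SE}\}$, $\{\mathrm{NW}\}$, each separately forced to stay by the accumulated lemmas; the line never needs to ``shrink to diameter 2'' for the argument to close. Second, the parity remark is unsound as stated: a non-hexagon fixed point does not require all seven robots to display one of the hexagon views, because $f$ may (and, as the proof shows, \emph{must}) map many other views to ``stay''. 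So the handshake count $7\cdot 3=21$ rules out only a deadlock in which every robot shows a wedge pattern, not forced deadlocks in general --- and a forced deadlock of the general kind is precisely what the impossibility ultimately rests on.
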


\begin{proof}
We show the proof by contradiction, that is, we assume that 
there exists a collision-free algorithm $\cal{A}$ to solve the gathering problem 
from any connected initial configuration.
In the proof, we consider several configurations and robot behaviors, and show that 
if some robot moves to some direction by Algorithm $\cal{A}$,
several robots cannot move anywhere
(i.e., they have to stay at the current nodes) 
since a collision occurs or the configuration becomes unconnected.
Eventually, we show that 
there is a configuration such that 
all robots need to stay at the current nodes and they cannot achieve gathering,
which is a contradiction.

First, we consider the configuration of Fig.\,\ref{fig:slash} (a).
In the figure, robot $r_i$ (resp., $r_j$) has one adjacent robot node SE (resp., NW) and 
the other robots have two adjacent robot nodes SE and NW, respectively. 
In such a configuration,  we first show that intermediate robots cannot leave the current nodes.

\begin{figure}[t!]
	\centering 
		\includegraphics[scale=0.45]{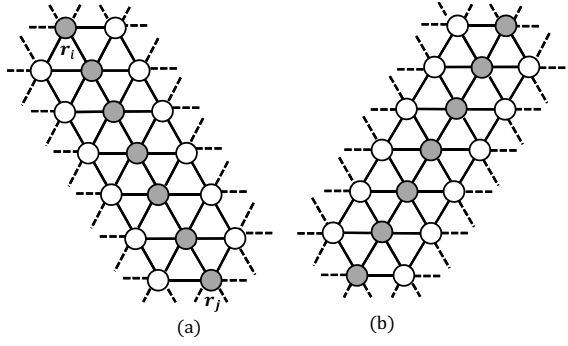}
		\caption{Configurations we consider in the proof.}
		\label{fig:slash}
\end{figure}

\begin{lemma}
	\label{lem:intermediate}
	A robot with two adjacent robot nodes W and E, SW and NE, or NW and SE 
	must stay at the current node. 
\end{lemma}

\begin{proof}
	We consider configurations of Fig.\,\ref{fig:degree2}.
	In each configuration, 
	robots $r_i$ and $r_j$ have two adjacent robot nodes W and E.
	On the other hand, robots $r_p$ and $r_q$ have three adjacent robot nodes and 
	they must stay at the current nodes because 
	they cannot detect whether 
	the current configuration is a gathering-achieved configuration or not.
	In addition, if $r_i$ moves to W, NW, or SW, $r_j$ also moves to the same direction because 
	they have the same view.
	Then, either in Fig.\,\ref{fig:degree2} (a) or (b),
	wherever  $r_k$ moves to, 
	a collision occurs  or the configuration becomes unconnected.
	By a similar discussion, 
	when $r_i$ and $r_j$ move to E, NE, or SE, 
	it causes a collision or an unconnected configuration 
	either in Fig.\,\ref{fig:degree2} (c) or (d).
	Thus, a robot with two adjacent robot nodes E and W cannot leave the current node. 
	By the similar discussion,  we can show that 
	a robot with two adjacent robot nodes SW and NE, or NW and SE must stay at the current node.
	Thus, the lemma follows. 
\end{proof}

By this lemma, we can have the following two colloraries.

\begin{collorary}
	\label{collo:oneNeighbor}
	A robot with one adjacent robot node E, SE, SW, W, NW, or NE 
	can move only to NE or SE, E or SW, SE or W, SW or NW, W or NE, or NW or E
	if it moves, respectively.
\end{collorary}

\begin{collorary}
	\label{collo:twoNeighbor}
	A robot with two adjacent robot nodes E and SW, SE and W, SW and NW, W and NE, 
	NW and E, or NE and SE can move only to node SE, SW, W, NW, NE, or E if it moves, 
	respectively.
\end{collorary}

By Lemma \ref{lem:intermediate}, 
 intermediate robots  in Fig.\,\ref{fig:slash} (a) cannot leave the current nodes, and hence 
$r_i$ or $r_j$ 
has to leave the current node.
Without loss of generality, we assume that in $\cal{A}$
robot $r_i$ with one adjacent robot node SE moves to SW. 
Notice that $r_i$ can move only to SW or E by Collorary \ref{collo:oneNeighbor}.
In the following, we consider several robot behaviors and eventually show that 
a robot with one adjacent robot node NE or SW must stay at the current node.
Then, in a configuration of Fig.\,\ref{fig:slash} (b),
all robots must stay at the current nodes and 
they cannot solve the gathering problem, which is a contradiction.

\begin{figure}[t!] 
	\centering
		\includegraphics[scale=0.525]{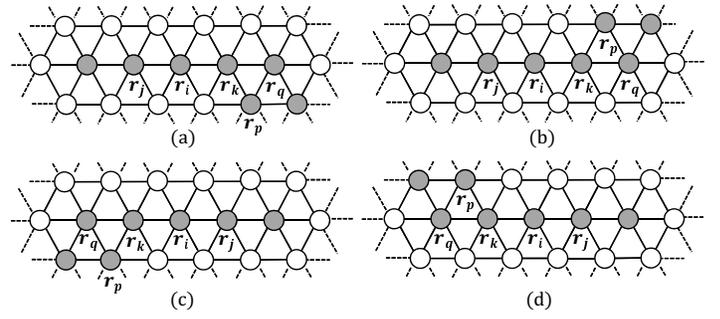}
		\caption{An example of a configuration that 
			a robot with two adjacent robot nodes W and E must stay at the current node.}
		\label{fig:degree2}
\end{figure}

When a robot with one adjacent robot node SE moves to SW,
several robot behaviors are not allowed since a collision occurs,
as shown in Fig.\,\ref{fig:assumptionCounter}
(for simplicity, we omit robot nodes unrelated to prohibited robot behaviors).
Concretely, we have the following proposition. 

\begin{proposition}
	\label{pro:SEtoSW}
	When a robot with one adjacent robot node SE moves to SW, 
	the following four robot behaviors are not allowed:
	(a) a robot with one adjacent robot node NE moves to NW,
	(b) a robot with two adjacent robot nodes NW and SW moves to W,
	(c) a robot with one adjacent robot node E moves to NE, and 
	(d) a robot with two adjacent robot nodes NW and E moves to NE.
\end{proposition}

In the following, we consider the following five cases: 
(1) a robot with one adjacent robot node NW moves to W,
(2) a robot with one adjacent robot node SW moves to SE, 
(3) a robot with one adjacent robot node NE moves to E,
(4) a robot with one adjacent robot node NW moves to NE, and 
(5) a robot with one adjacent robot node SW moves to W.
In each case, we show that the assumed robot behavior is not allowed.
Thus, by Proposition \ref{pro:SEtoSW}-(a) and cases (2), (3), and (5), 
robots cannot achieve gathering from the configuration of Fig.\,\ref{fig:slash} (b),
which is a contradiction
(results of cases (1) and (4) are used for cases (2), (3), and (5)).

\begin{figure}[t!]
	\centering 
	\includegraphics[scale=0.545]{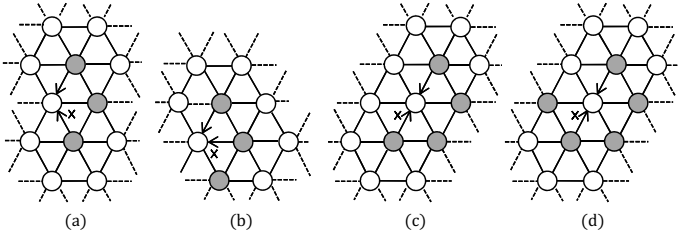}
	\caption{Prohibited behaviors when 
		a robot with one adjacent robot node SE moves to SW.}
	\label{fig:assumptionCounter}
\end{figure}

\textbf{Case 1: a robot with one adjacent robot node NW moves to W.} 
In this case, as shown in Fig.\,\ref{fig:counter1},
the following three robot behaviors are not allowed:
(a) a robot with two adjacent robot nodes W and SE moves to SW, 
(b) a robot with one adjacent robot node E moves to SE, and 
(c) a robot with one adjacent robot node NE moves to E.
Then, let us consider the configuration of Fig.\,\ref{fig:case1}.
In the configuration, by Proposition \ref{pro:SEtoSW} and the above discussion,
no robot can leave the current node and robots cannot achieve gathering, 
which is a contradiction.
Thus, we have the following lemma.

\begin{lemma}
	\label{lem:noNWtoW}
	A robot with one adjacent robot node NW cannot move to node W.  
\end{lemma}

\if()
\begin{figure}[t!]
	\centering
	\includegraphics[scale=0.54]{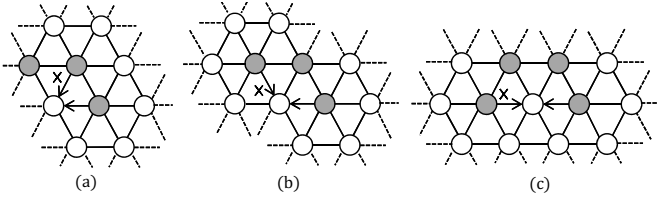}
	\caption{Prohibited behaviors when 
		a robot with one adjacent robot node NW moves to node W.}
	\label{fig:counter1}
\end{figure}
\fi

\begin{figure}[t!]
	\centering
	\includegraphics[scale=0.54]{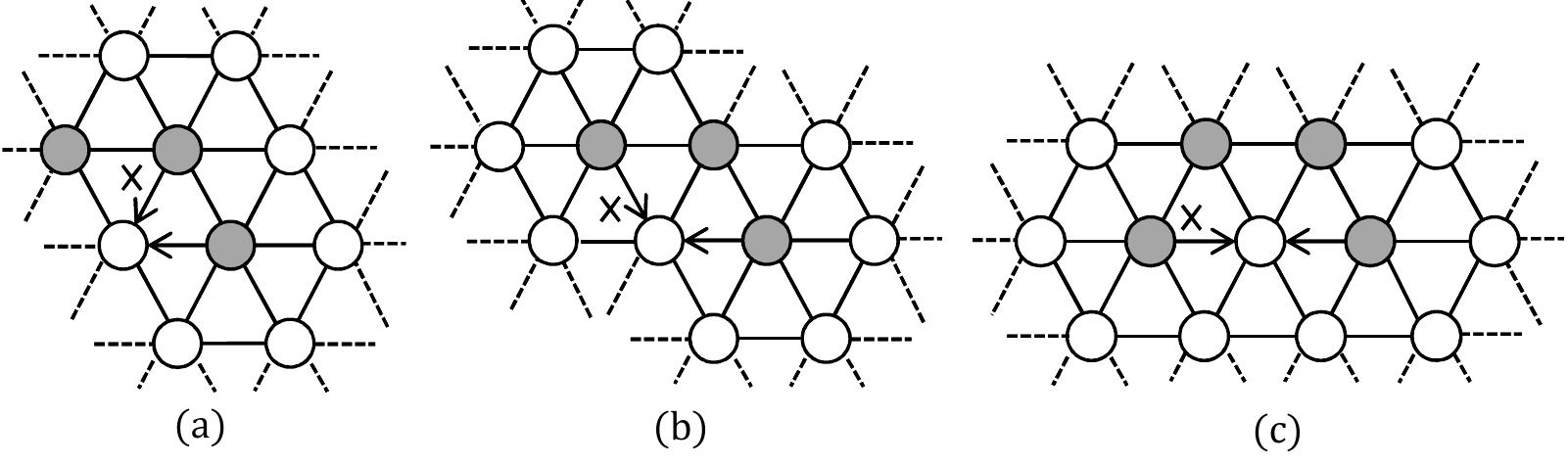}
	\caption{Prohibited behaviors when 
		a robot with one adjacent robot node NW moves to node W.}
	\label{fig:counter1}	
\end{figure}

\textbf{Case 2: a robot with one adjacent robot node SW moves to SE.}
In this case, as shown in Fig.\,\ref{fig:case2Counter}, 
the following four robot behaviors are not allowed: 
(a) a robot with one adjacent  robot node NW moves to NE,
(b) a robot with two adjacent robot nodes NE and SE moves to E,  
(c) a robot with one adjacent robot node W moves to NW, and 
(d) a robot with two adjacent robot nodes NW and E (resp., W and NE) moves to NE (resp., NW).
Then, in a configuration of Fig.\,\ref{fig:case2Mid},
only robot $r_p$ with two adjacent robot nodes SW and E can move to SE or 
robot $r_q$ with two adjacent robot nodes W and SE can move to SW
by the above discussion and Lemmas \ref{lem:intermediate} and \ref{lem:noNWtoW}
and Collorary \ref{collo:twoNeighbor}.
We consider each of the behaviors and show for both the cases that 
robots cannot achieve gathering from some configuration.

\begin{figure}[t!]
	\centering
	\includegraphics[scale=0.405]{{eps/case1}}
	\caption{An unsolvable configuration when  a robot with 
		one adjacent robot node NW moves to  W
		((i): by Fig.\,\ref{fig:assumptionCounter} (c),
		(ii): by Fig.\,\ref{fig:counter1} (b),
		(iii): by Fig.\,\ref{fig:counter1} (a),
		(iv): by Fig.\,\ref{fig:assumptionCounter} (b),
		(v): by Fig.\,\ref{fig:assumptionCounter} (a), 
		(vi): by Fig.\,\ref{fig:counter1} (c)).}
	\label{fig:case1}	
\end{figure}

\if()
\begin{figure}[t!]
	\begin{center}
		\includegraphics[scale=0.55]{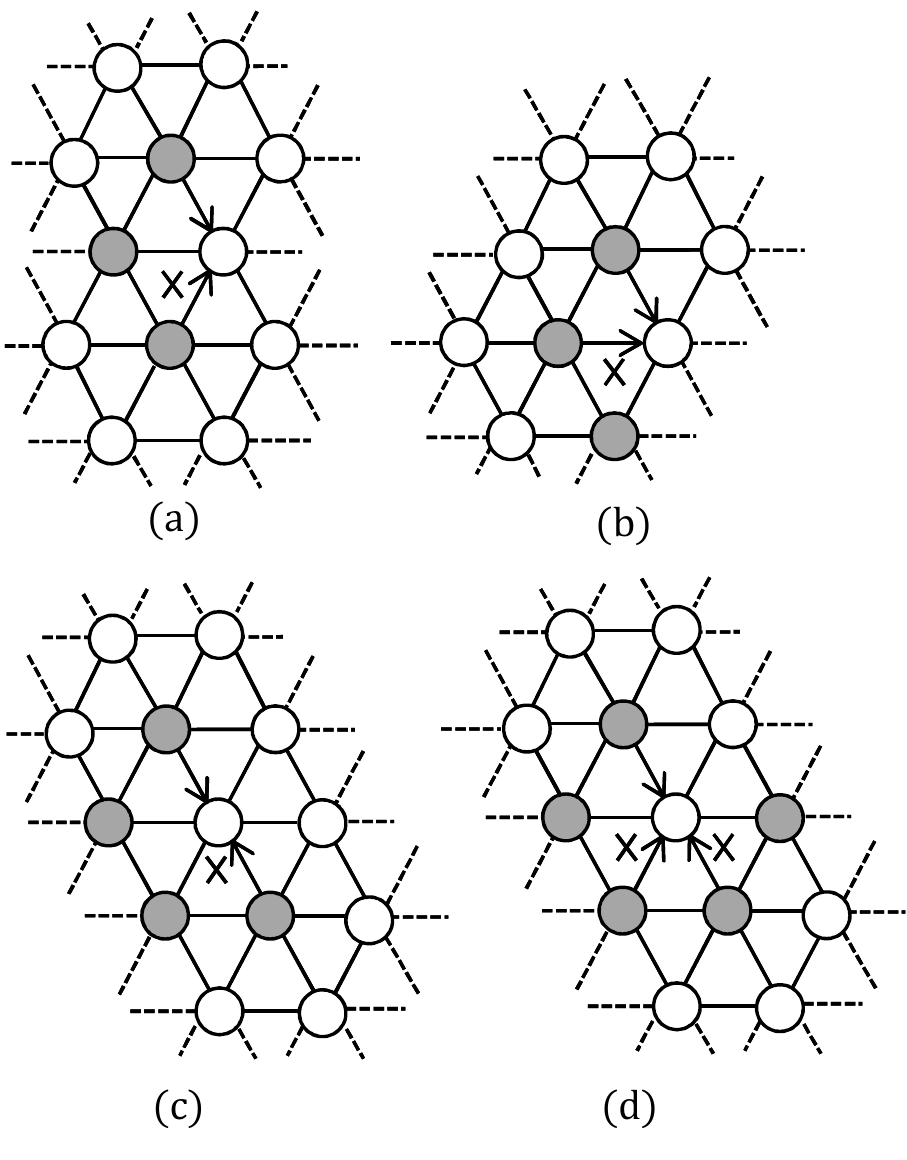}
		\caption{Prohibited behaviors under the assumption that
			a robot with one adjacent robot node SW moves to SE.}
		\label{fig:case2Counter}
	\end{center}
\end{figure}
\fi

\textit{Case 2-1: robot $r_p$ moves to  SE.}
In this case, 
clearly a robot with one adjacent robot node NE cannot move to E and
a robot with one adjacent robot node W cannot move to SW since a collision occurs.
In addition, when considering a configuration of Fig.\ref{fig:case2-1Mid} (a),
only robot $r_i$ with one adjacent robot node E can leave the current node and 
it needs to move to SE by the previous discussions.
Now, we consider the configuration of Fig.\,\ref{fig:case2-1} (a).
In the figure, robot $r_1, r_3$, and $r_5$ move to SE and 
the other robots must stay at the current nodes.
Then, the system reaches the configuration of Fig.\,\ref{fig:case2-1} (b).
In the configuration, robots $r_0, r_2, r_4$, and $r_6$ move to SE and 
the other robots must stay at the current nodes.
Then, the system reaches the configuration of Fig.\,\ref{fig:case2-1} (a).
Thus, robots repeat configurations of Fig.\,\ref{fig:case2-1} (a) and (b) forever and 
they cannot achieve gathering, which is a contradiction.

\begin{figure}[t!]
	
	\begin{minipage}{0.615\hsize}
		\centering 
		\includegraphics[scale=0.57]{eps/case2Counter}
		\caption{Prohibited behaviors when 
					a robot with one adjacent robot node SW moves to SE.}
		\label{fig:case2Counter}
	\end{minipage}
	\begin{minipage}{0.285\hsize}
		\centering
		\includegraphics[scale=0.45]{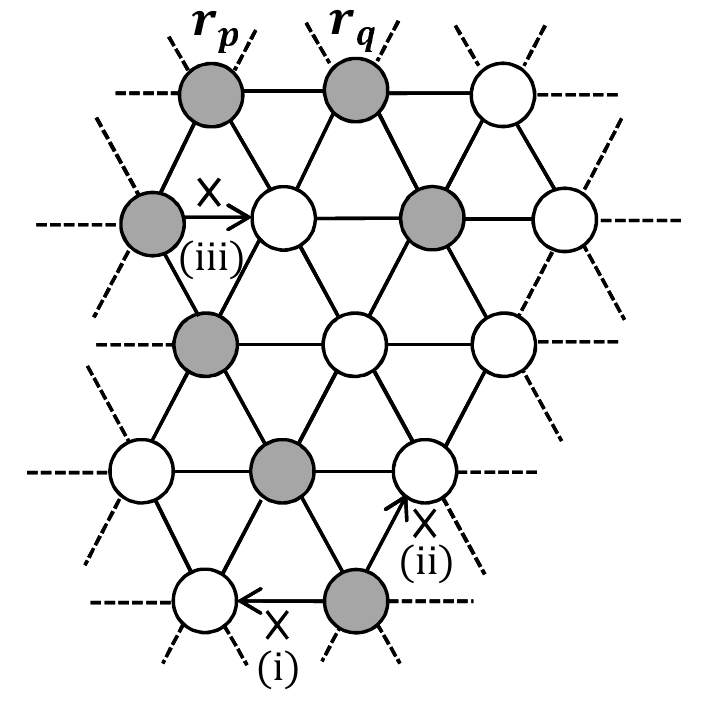}
		\caption{A configuration such that 
			only $r_p$ or $r_q$ can leave the current node 
			((i): by Lemma \ref{lem:noNWtoW}, 
			(ii): by Fig.\,\ref{fig:case2Counter} (a),  
			(iii): by Fig.\,\ref{fig:case2Counter} (b)).}
		\label{fig:case2Mid}
	\end{minipage}
	\begin{minipage}{0.01\hsize}
		\hspace{2mm}
	\end{minipage}
\end{figure}

\if()

\begin{figure*}[t!]

	\begin{minipage}{0.32\hsize}
		\begin{center}
			\includegraphics[scale=0.45]{eps/case2Mid}
			\caption{A configuration such that 
				only $r_p$ or $r_q$ can leave the current node 
				((i): by Lemma \ref{lem:noNWtoW}, 
				(ii): by Fig.\,\ref{fig:case2Counter} (a),  
				(iii): by Fig.\,\ref{fig:case2Counter} (b)).}
			\label{fig:case2Mid}
		\end{center}
	\end{minipage}
	\begin{minipage}{0.64\hsize}
	\begin{center}
			\includegraphics[scale=0.69]{{eps/case2-1Mid}}
	\caption{(a): An example such that only a robot $r_i$ with one adjacent robot node E can 
		leave the current node in Case 2-1
		((i): by Fig.\, \ref{fig:assumptionCounter} (c),
		(ii): by Fig.\,\ref{fig:case2Counter} (c),  
		(iii): prohibited robot behavior when 
		a robot with two adjacent robot nodes SW and E moves to SE), 
		(b): An example such that only a robot $r_i$ with one adjacent robot node W can 
		leave the current node in Case 2-2
		((iv): by Fig.\, \ref{fig:assumptionCounter} (c),
		(v): prohibited robot behavior when 
		a robot with two adjacent robot nodes W and SE moves to SW,  
		(vi): by Fig.\,\ref{fig:case2Counter} (c)).}
	\label{fig:case2-1Mid}
	\end{center}
\end{minipage}
\begin{minipage}{0.01\hsize}
	\hspace{2mm}
\end{minipage}
\end{figure*}

\fi

\begin{figure*}[t!]
	\centering
		\includegraphics[scale=0.69]{{eps/case2-1Mid}}
		\caption{(a): An example such that only a robot $r_i$ with one adjacent robot node E can 
			leave the current node in Case 2-1
			((i): by Fig.\, \ref{fig:assumptionCounter} (c),
			(ii): by Fig.\,\ref{fig:case2Counter} (c),  
			(iii): prohibited behavior when 
			a robot with two adjacent robot nodes SW and E moves to SE), 
			(b): An example such that only a robot $r_i$ with one adjacent robot node W can 
			leave the current node in Case 2-2
			((iv): by Fig.\, \ref{fig:assumptionCounter} (c),
			(v): prohibited  behavior  when 
			a robot with two adjacent robot nodes W and SE moves to SW,  
			(vi): by Fig.\,\ref{fig:case2Counter} (c)).}
		\label{fig:case2-1Mid}
\end{figure*}

\begin{figure*}[t!]
	\centering
		\includegraphics[scale=0.67]{{eps/case2-1}}
		\caption{Configurations that robots repeat alternately
			((i): by Fig.\,\ref{fig:case2-1Mid} (a),
			(ii): by Fig.\,\ref{fig:case2Counter} (d), 
			(iii): assumption of Case 2-1,
			(iv): assumption of Case 2,
			(v): by Fig.\,\ref{fig:assumptionCounter} (a),
			(vi):  prohibited  behavior when 
			a robot with two adjacent robot nodes SW and E moves to SE, 
			(vii): by Fig.\,\ref{fig:case2Counter} (c),  
			(viii): prohibited behavior when 
			a robot with two adjacent robot nodes SW and E moves to SE).
		}
		\label{fig:case2-1}
\end{figure*}

\begin{figure*}[t!]
	\centering
		\includegraphics[scale=0.67]{{eps/case2-2}}
		\caption{Configurations that robots repeat alternately
			((i): assumption of Algorithm $\cal{A}$,
			(ii): by Fig.\,\ref{fig:case2Counter} (d), 
			(iii): assumption of Case 2-2,
			(iv): by Fig.\,\ref{fig:case2-1Mid} (b),
			(v): by Fig.\,\ref{fig:assumptionCounter} (c), 
			(vi):  prohibited behavior when 
			a robot with two adjacent robot nodes W and SE moves to SW, 
			(vii): by Lemma \ref{lem:noNWtoW},  
			(viii): by Fig.\,\ref{fig:case2Counter} (a)).
		}
		\label{fig:case2-2}
\end{figure*}

\textit{Case 2-2: robot $r_q$ moves to  SW.}
In this case, 
clearly a robot with one adjacent robot node E cannot move to SE since a collision occurs.
In addition, when considering a configuration of Fig.\,\ref{fig:case2-1Mid} (b),
only robot $r_i$ with one adjacent robot node W can leave the current node and 
it needs to move to SW by the previous discussions.
Now, we consider the configuration of Fig.\,\ref{fig:case2-2} (a).
In the figure, robot $r_0, r_2, r_4$, and $r_6$ move to SW and 
the other robots must stay at the current nodes.
Then, the system reaches the configuration of Fig.\,\ref{fig:case2-2} (b).
In the configuration, robots $r_1, r_3$, and $r_5$ move to SW and 
the other robots must stay at the current nodes.
Then, the system reaches the configuration of Fig.\,\ref{fig:case2-2} (a).
Thus, robots repeat configurations of Fig.\,\ref{fig:case2-2} (a) and (b) forever and 
they cannot achieve gathering, which is a contradiction.
Thus, we have the following lemma.

\begin{lemma}
	\label{lem:noSWtoSE}
	A robot with one adjacent robot node \textit{SW} cannot move to node \textit{SE}. 
\end{lemma}

\textbf{Case 3: a robot with one adjacent robot node NE moves to E.} 
In this case, the following two robot behaviors are not allowed (Fig.\,\ref{fig:case3Counter}):
a robot with two adjacent robot nodes SW and E (resp., W and SE) 
moves to SE (resp., SW).
Then, in a configuration of Fig.\,\ref{fig:case3Mid},
it is necessary that at least 
a robot $r_p$ with two adjacent robot nodes NE and SE moves to E or 
a robot $r_q$ with one adjacent robot node with NW moves to NE
by the previous discussions.
We consider each of the behaviors and show for both the cases that 
robots cannot achieve gathering from some configuration.

\begin{figure}[t!]
	\begin{minipage}{0.47\hsize}
		\begin{center}
			\includegraphics[scale=0.475]{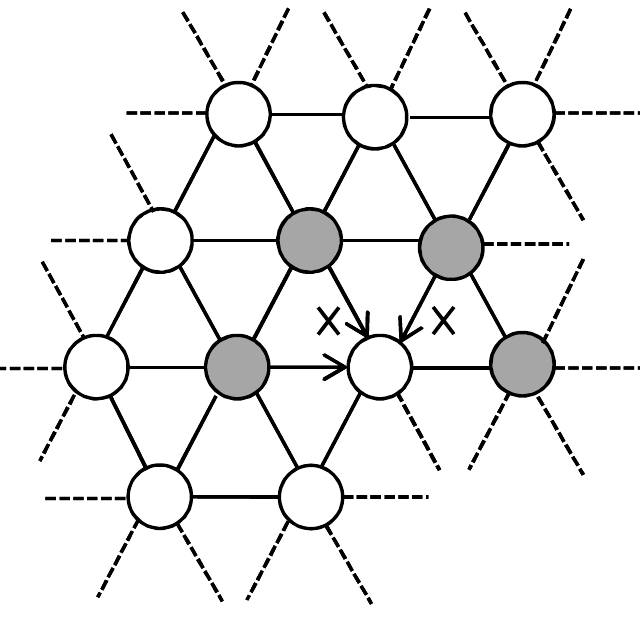}
			\caption{Prohibited behaviors when 
				a robot with one adjacent robot node NE moves to E.}
			\label{fig:case3Counter}
		\end{center}
	\end{minipage} 
	\begin{minipage}{0.01\hsize}
		\hspace{2mm}
	\end{minipage}
	\begin{minipage}{0.47\hsize}
		\begin{center}
			\includegraphics[scale=0.475]{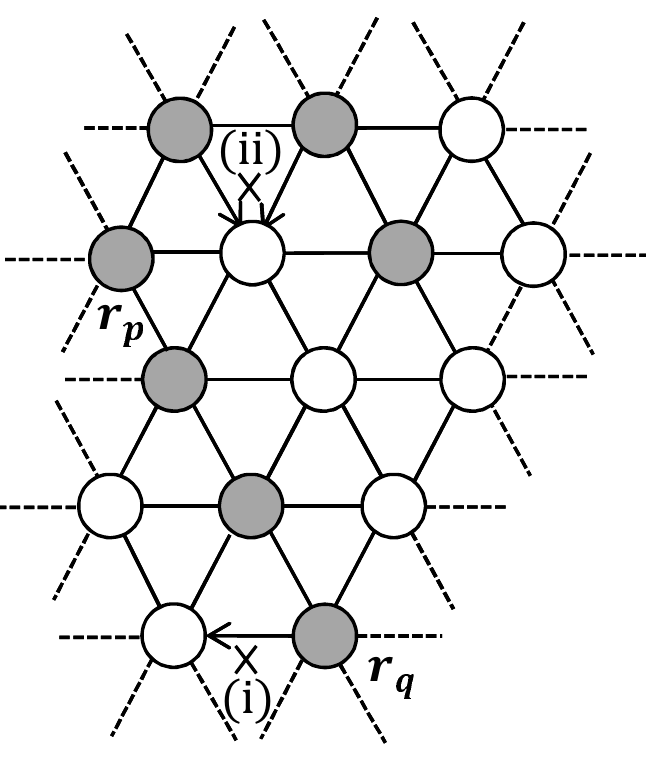}
			\caption{A configuration such that 
				only $r_p$ or $r_p$ can leave the current node 
				((i): by Lemma \ref{lem:noNWtoW},
				(ii): by Fig.\,\ref{fig:case3Counter}).
			}
			\label{fig:case3Mid}
		\end{center}
	\end{minipage}
\end{figure}

\begin{figure}[t!]
	\begin{center}
		\includegraphics[scale=0.59]{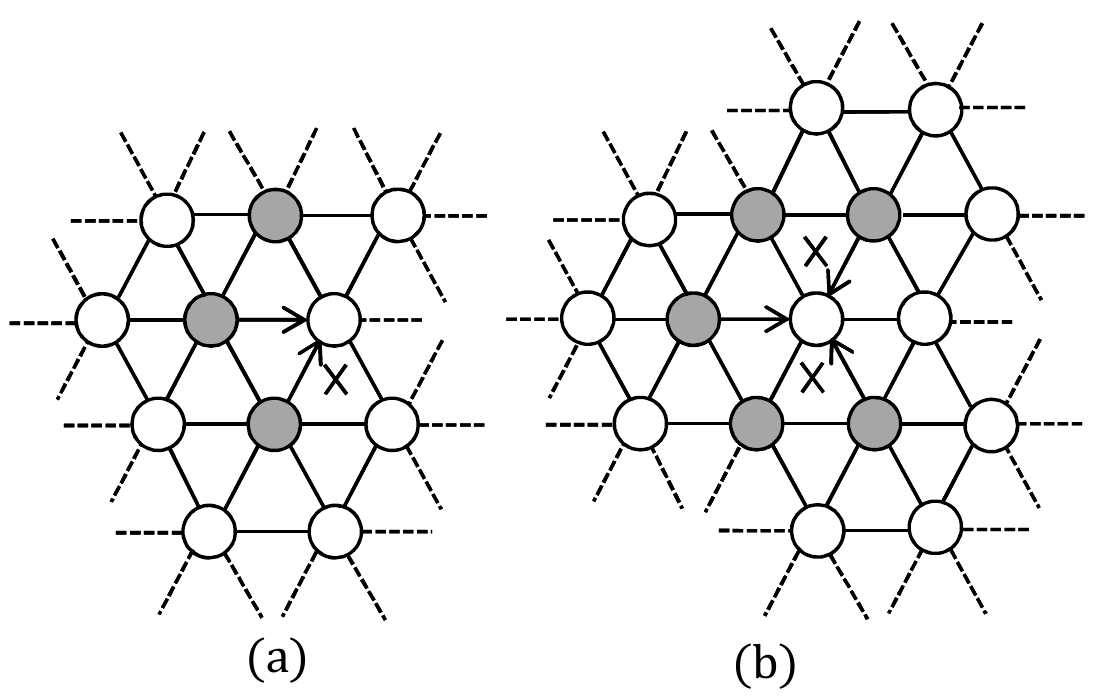}
		\caption{Prohibited behaviors when 
			a robot with two adjacent robot node NE and SE moves to E.}
		\label{fig:case3-1Counter}
	\end{center}
\end{figure}

\begin{figure}[t!]
	\begin{center}
		\includegraphics[scale=0.55]{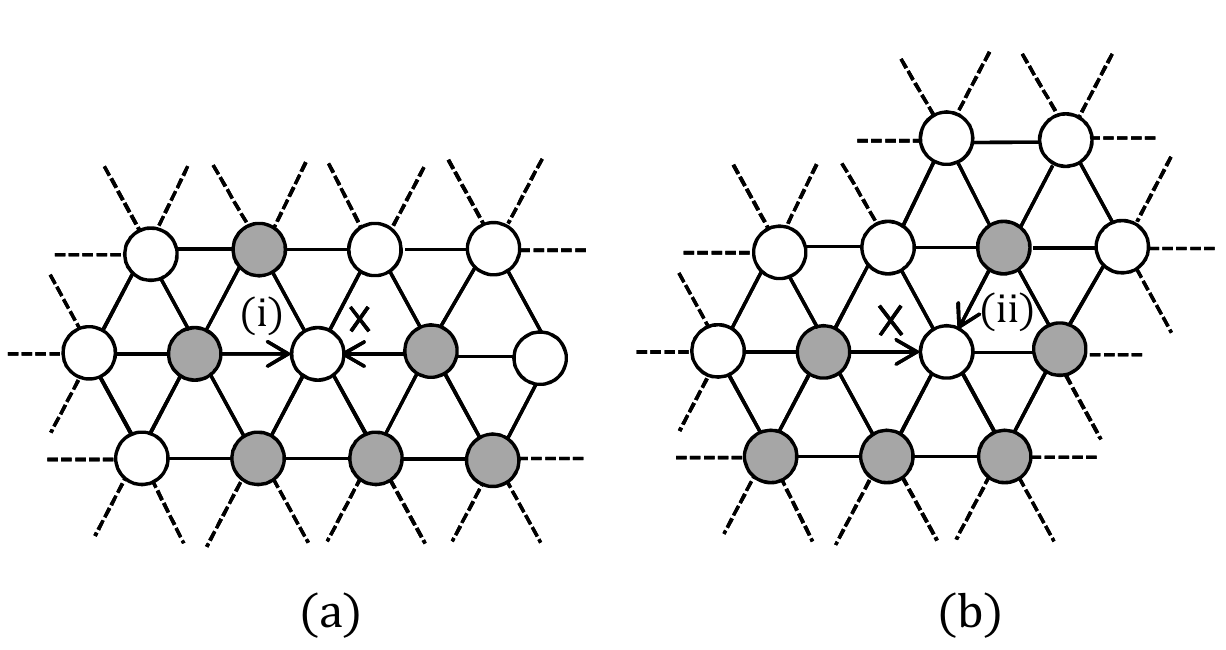}
		\caption{Configurations showing that 
			a robot with two adjacent robot nodes SW and SE must stay at the current node
			((i): assumption of Case 3-1,
			(ii): assumption of Algorithm $\cal{A}$).
		}
		\label{fig:case3-1Counter2}
	\end{center}
\end{figure}
\begin{figure}[t!]
	\begin{center}
		\includegraphics[scale=0.49]{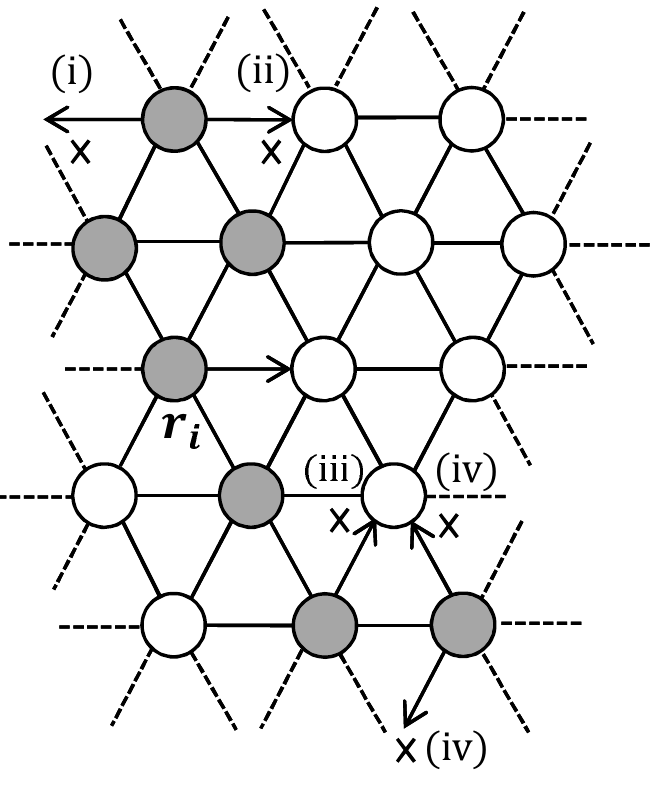}
		\caption{A configuration such that 
			only robot $r_i$ with three adjacent robot nodes NW, NE, and SE can 
			leave the current node
			((i): by Fig.\,\ref{fig:case3-1Counter2} (a),
			(ii): by Fig.\,\ref{fig:case3-1Counter2} (b),
			(iii): by Fig.\,\ref{fig:assumptionCounter} (d),  
			(iv): by Fig.\,\ref{fig:case3-1Counter} (b)).
		}
		\label{fig:case3-1Mid}
	\end{center}
\end{figure}

\begin{figure}[t!]
	\begin{center}
		\includegraphics[scale=0.55]{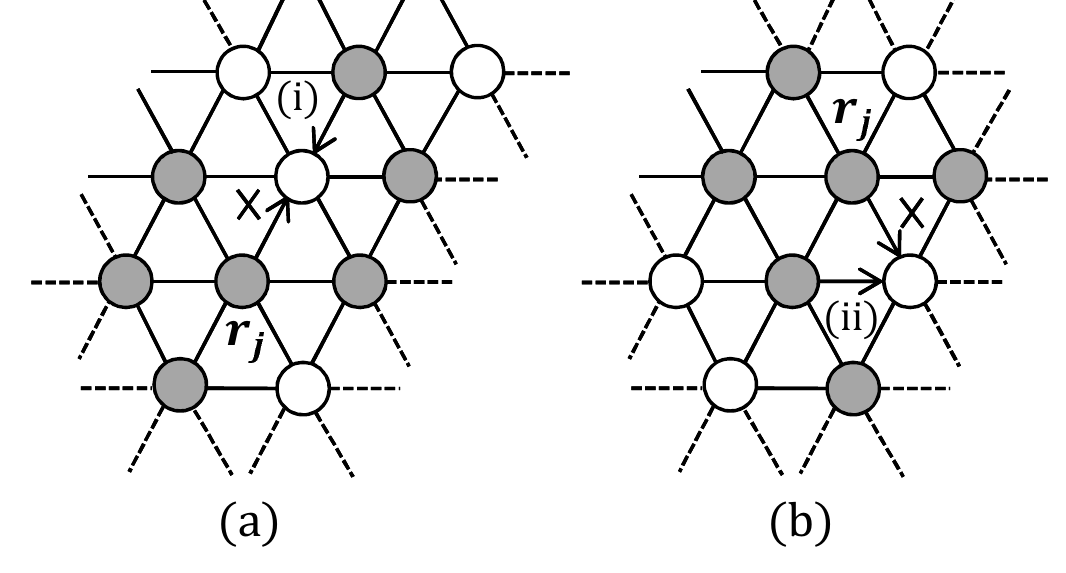}
		\caption{Configurations showing that robot $r_j$ with 
			four adjacent robot nodes E, SW, W, and NW must stay at the current node
			((i): assumption of Algorithm $\cal{A}$,
			(ii): by Fig.\,\ref{fig:case3-1Mid}).}
		\label{fig:case3-1Counter3}
	\end{center}
\end{figure}

\begin{figure}[t!]
	\begin{center}
		\includegraphics[scale=0.4]{{eps/case3-1Mid2}}
		\caption{A configuration such that 
			only robot $r_i$ with two adjacent robot nodes NE and NW can leave the current node
			((i): by Fig.\,\ref{fig:case3-1Counter2} (a),
			(ii): by Fig.\,\ref{fig:case3-1Counter2} (b),
			(iii): by Fig.\,\ref{fig:case3-1Counter3} (a), 
			(iv): by Fig.\,\ref{fig:case3-1Counter3} (b), 
			(v): by Fig.\,\ref{fig:case3-1Counter} (b)).}
		\label{fig:case3-1Mid2}
	\end{center}
\end{figure}

\begin{figure}[t!]
	\begin{center}
		\includegraphics[scale=0.6]{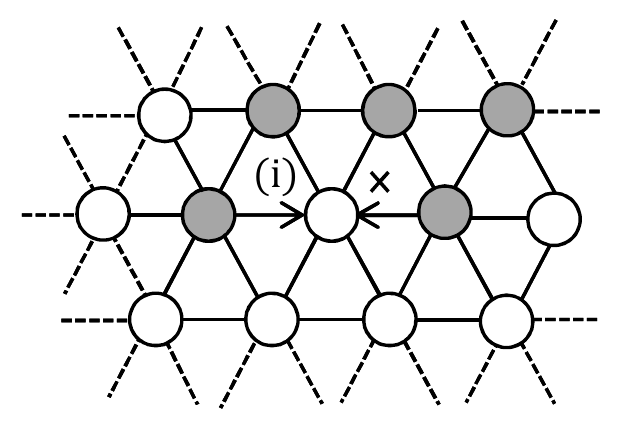}
		\caption{An example showing that a robot with two adjacent robot nodes 
			NE and NW cannot move to W
			((i): assumption of Case 3).}
		\label{fig:case3-1Counter4}
	\end{center}
\end{figure}

\begin{figure}[t!]
	\begin{center}
		\includegraphics[scale=0.65]{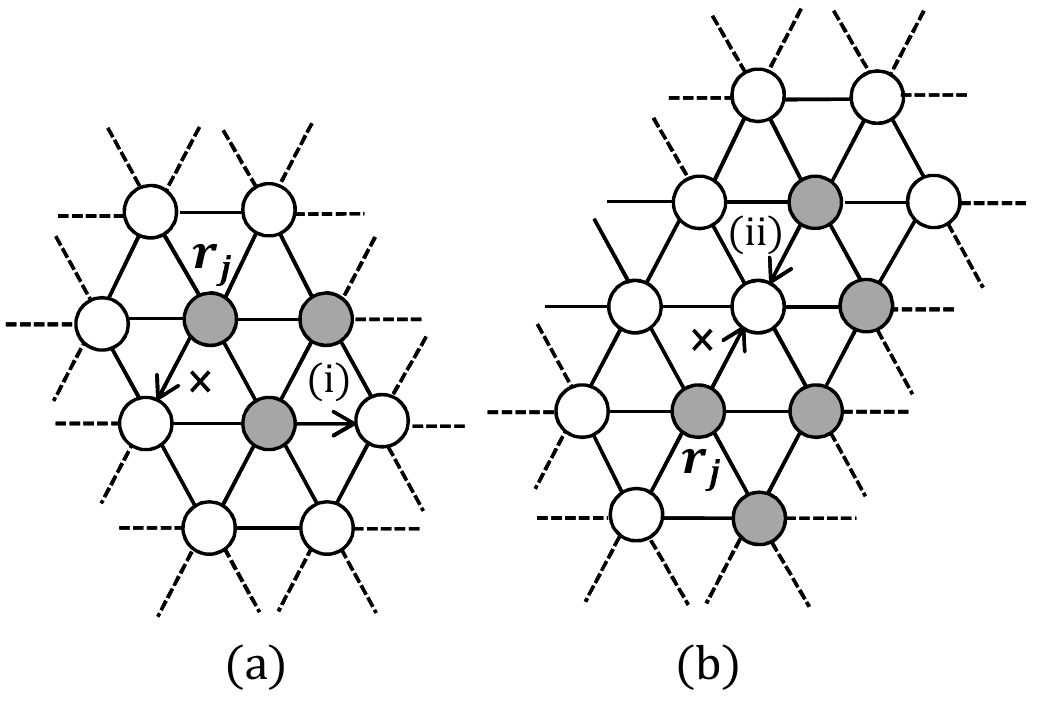}
		\caption{Configurations showing that 
			a robot with two adjacent robot nodes SW and SE must stay at the current node
			((i): by Fig.\,\ref{fig:case3-1Counter4}, 
			(ii): assumption of Algorithm $\cal{A}$).
		}
		\label{fig:case3-1Counter5}
	\end{center}
\end{figure}

\begin{figure}[t!]
	\begin{center}
		\includegraphics[scale=0.45]{{eps/case3-1final}}
		\caption{An unsolvable configuration 
			when 
			a robot with two adjacent robot nodes NE and SE moves to E
			((i): by Fig.\,\ref{fig:case3-1Counter5} (a),
			(ii): by Fig.\,\ref{fig:case3-1Counter5} (b), 
			(iii): by Fig.\,\ref{fig:case3-1Counter} (b)).}
		\label{fig:case3-1final}
	\end{center}
\end{figure}

\textit{Case 3-1: robot $r_p$ moves to E}.
In this case, the following two robot behaviors are not allowed (Fig.\,\ref{fig:case3-1Counter}):
(a) a robot with one adjacent robot node NW moves to NE and 
(b) a robot with one adjacent robot node W moves to NW or SW.
In addition, when considering configurations of Fig.\,\ref{fig:case3-1Counter2},
a robot with two adjacent robot node SW and SE must stay at the current node.
Then, in a configuration of Fig.\,\ref{fig:case3-1Mid}
only robot $r_i$ with three adjacent robot nodes NW, NE, and SE can leave the current node
and it needs to move to E to avoid a collision or an unconnected configuration.
In addition, when considering configurations of Fig.\,\ref{fig:case3-1Counter3},
a robot $r_j$ with four adjacent robots nodes E, SW, W, and NW must stay at the current node.
Hence, in a configuration of Fig.\,\ref{fig:case3-1Mid2},
only  robot $r_i$ with two adjacent robot nodes NW and NE can leave the current node.
Then, 
in a configuration of Fig.\,\ref{fig:case3-1Counter4},
a collision occurs when $r_i$ moves to W.
Hence, it needs to move to E.
Next, we consider the behavior of robot $r_j$ with two adjacent robot nodes E and SE.
When it moves to SW, the configuration becomes unconnected if
its southeast robot $r_i$ has two adjacent robot nodes NW and NE, and moves to E
(Fig.\,\ref{fig:case3-1Counter5} (a)).
In addition, $r_j$ cannot move to NE since a collision occurs in a configuration of Fig.\,\ref{fig:case3-1Counter5} (b). 
Thus, robot $r_j$ must stay at the current node.
Finally, in a configuration of Fig.\,\ref{fig:case3-1final},
no robot can leave the current node and robots cannot achieve gathering.
Therefore, a robot $r_p$ with two adjacent robot nodes NE and SE must stay at the current node.
By a similar discussion, we can also show that a robot with three adjacent robot nodes
NE, SE, and SW must stay at the current node.

\if()

\begin{figure}[t!]
	\begin{minipage}{0.55\hsize}
		\begin{center}
			\includegraphics[scale=0.5]{eps/case3-1Mid}
			\caption{A configuration such that 
				only robot $r_i$ with three adjacent robot nodes NW, NE, and SE can 
				leave the current node
				((i): by Fig.\,\ref{fig:case3-1Counter2} (a),
				(ii): by Fig.\,\ref{fig:case3-1Counter2} (b),
				(iii): by Fig.\,\ref{fig:assumptionCounter} (d),  
				(iv): by Fig.\,\ref{fig:case3-1Counter} (b)).
			}
			\label{fig:case3-1Mid}
		\end{center}
	\end{minipage}
	\begin{minipage}{0.01\hsize}
		\hspace{2mm}
	\end{minipage}
	\begin{minipage}{0.4\hsize}
		\begin{center}
			\includegraphics[scale=0.55]{eps/case3-1Counter3}
			\caption{Configurations showing that robot $r_j$ with 
				four adjacent robot nodes E, SW, W, and NW must stay at the current node
				((i): assumption of Algorithm $\cal{A}$,
				(ii): by Fig.\,\ref{fig:case3-1Mid}).}
			\label{fig:case3-1Counter3}
		\end{center}
	\end{minipage}
\end{figure}

\fi

\if()
\begin{figure}[t!]
	\begin{minipage}{0.48\hsize}
		\begin{center}
			\includegraphics[scale=0.5]{{eps/case3-1Mid2}}
			\caption{A configuration such that 
				only robot $r_i$ with two adjacent robot nodes NE and NW can leave the current node
				((i): by Fig.\,\ref{fig:case3-1Counter2} (a),
				(ii): by Fig.\,\ref{fig:case3-1Counter2} (b),
				(iii): by Fig.\,\ref{fig:case3-1Counter3} (a), 
				(iv): by Fig.\,\ref{fig:case3-1Counter3} (b), 
				(v): by Fig.\,\ref{fig:case3-1Counter} (b)).
			}
			\label{fig:case3-1Mid2}
		\end{center}
	\end{minipage}
	\begin{minipage}{0.01\hsize}
		\hspace{2mm}
	\end{minipage}
	\begin{minipage}{0.48\hsize}
		\begin{center}
			\includegraphics[scale=0.65]{eps/case3-1Counter4}
			\caption{An example showing that a robot with two adjacent robot nodes 
				NE and NW cannot move to W
				((i): assumption of Case 3).}
			\label{fig:case3-1Counter4}
		\end{center}
	\end{minipage}
\end{figure}
\fi

\if()

\begin{figure}[t!]
	\begin{minipage}{0.45\hsize}
		\begin{center}
			\includegraphics[scale=0.55]{eps/case3-1Counter5}
			\caption{Configurations showing that 
				a robot with two adjacent robot nodes SW and SE must stay at the current node
				((i): by Fig.\,\ref{fig:case3-1Counter4}, 
				(ii): assumption of Algorithm $\cal{A}$).
			}
			\label{fig:case3-1Counter5}
		\end{center}
	\end{minipage}
	\begin{minipage}{0.01\hsize}
		\hspace{2mm}
	\end{minipage}
	\begin{minipage}{0.51\hsize}
		\begin{center}
			\includegraphics[scale=0.45]{{eps/case3-1final}}
			\caption{An unsolvable configuration 
				when
				a robot with two adjacent robot nodes NE and SE moves to E
				((i): by Fig.\,\ref{fig:case3-1Counter5} (a),
				(ii): by Fig.\,\ref{fig:case3-1Counter5} (b), 
				(iii): by Fig.\,\ref{fig:case3-1Counter} (b)).}
			\label{fig:case3-1final}
		\end{center}
	\end{minipage}
\end{figure}

\fi

\begin{figure}[t!]
	\begin{minipage}{0.48\hsize}
		\begin{center}
			\includegraphics[scale=0.55]{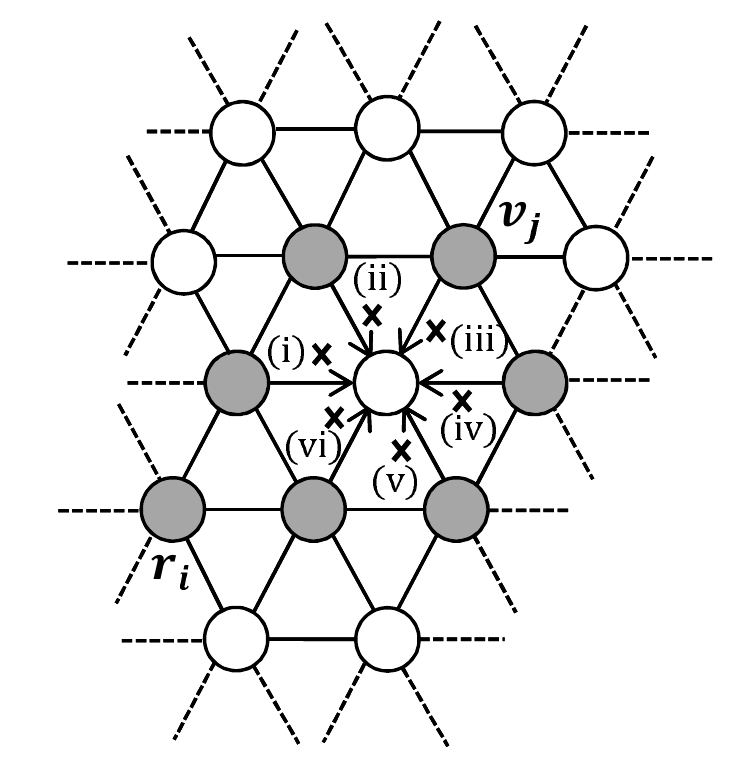}
			\caption{A configuration such that only robot $r_i$ with two adjacent robot nodes 
				NE and E can leave the current node
				((i): by Case 3-1,
				(ii), (iii): by Fig.\,\ref{fig:case3Counter},  
				(iv)-(vi): behaviors that may cause a collision if 
				a robot at $v_j$ has one adjacent robot node SE and 
				moves to SW by the hypothesis of Algorithm $\cal{A}$).}
			\label{fig:case3-2counter1}
		\end{center}
	\end{minipage}
	\begin{minipage}{0.01\hsize}
		\hspace{2mm}
	\end{minipage}
	\begin{minipage}{0.48\hsize}
		\begin{center}
			\includegraphics[scale=0.6]{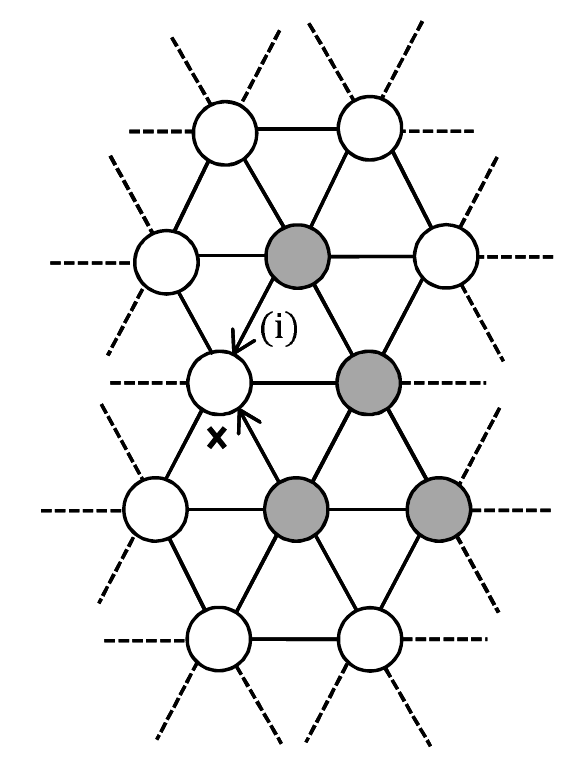}
			\caption{An example showing that 
				a robot with two adjacent robot nodes NE and E cannot move to NW
				((i): assumption of Algorithm $\cal{A}$).}
			\label{fig:case3-2counter2}
		\end{center}
	\end{minipage}
\end{figure}

\begin{figure}[t!]
	\begin{minipage}{0.48\hsize}
		\begin{center}
			\includegraphics[scale=0.55]{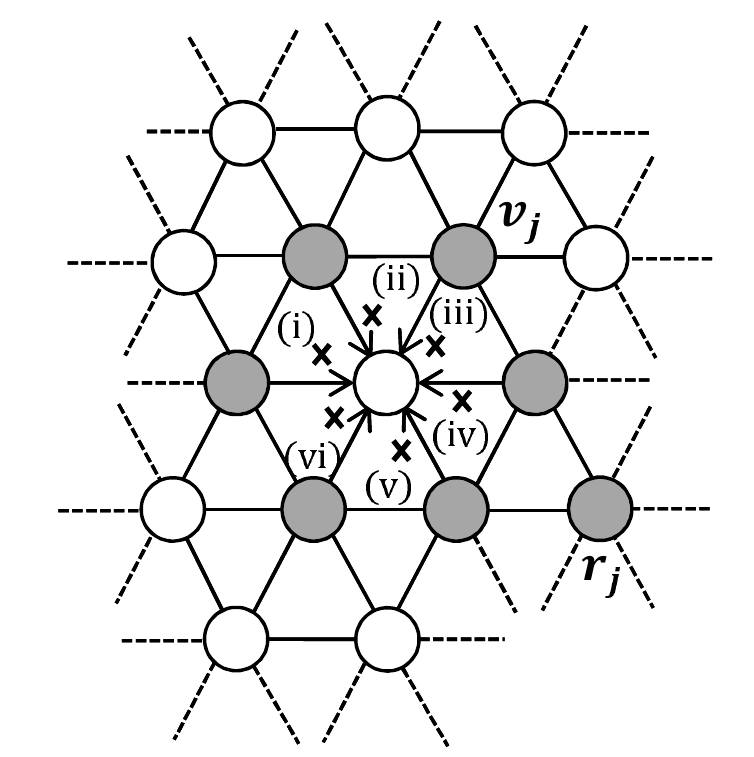}
			\caption{A configuration such that only robot $r_i$ with two adjacent robot nodes 
				NW and W can leave the current node
				((i): by Case 3-1,
				(ii), (iii): by Fig.\,\ref{fig:case3Counter},  
				(iv)-(vi): behaviors that may cause a collision if 
				a robot at $v_j$ has one adjacent robot node SE and 
				moves to SW by the hypothesis of Algorithm $\cal{A}$).}
			\label{fig:case3-2counter3}
		\end{center}
	\end{minipage}
	\begin{minipage}{0.01\hsize}
		\hspace{2mm}
	\end{minipage}
	\begin{minipage}{0.48\hsize}
		\begin{center}
			\includegraphics[scale=0.55]{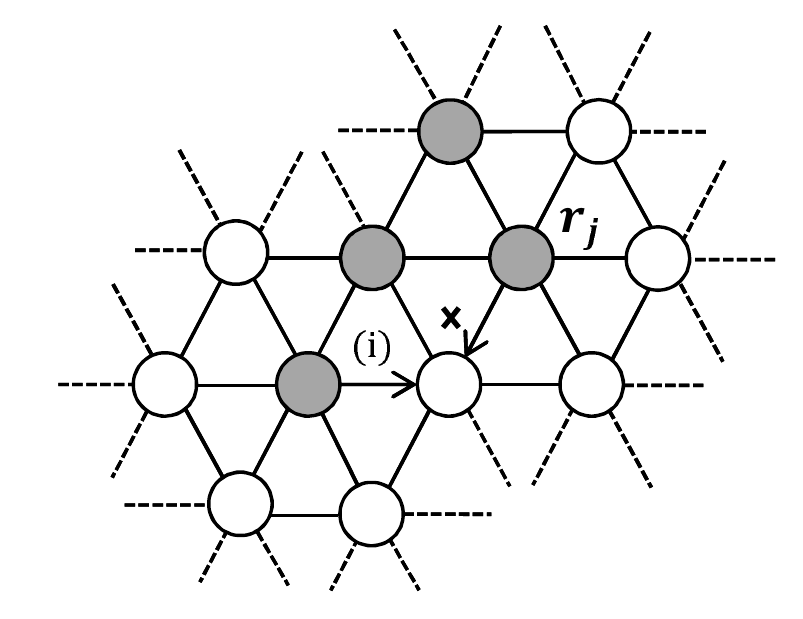}
			\caption{An example showing that 
				a robot with two adjacent robot nodes NW and W cannot move to SW
				((i): assumption of Case 3).}
			\label{fig:case3-2counter4}
		\end{center}
	\end{minipage}
\end{figure}

\textit{Case 3-2: robot $r_q$ moves to NE.}	
In this case, when considering a configuration of Fig.\,\ref{fig:case3-2counter1},
only robot $r_i$ with two adjacent robot nodes E and NE can leave the current node.
However, $r_i$ cannot move to NW since a collision occurs in a configuration of 
Fig.\,\ref{fig:case3-2counter2}
and it needs to move to SE.
Similarly, in a configuration of Fig.\, \ref{fig:case3-2counter3}
only robot $r_j$ with two adjacent robot nodes NW and W can leave the current node.
However, $r_j$ cannot move to SW since a collision occurs in a configuration of Fig.\,\ref{fig:case3-2counter4} 
and it needs to move to NE.
Now, we consider the configuration of Fig.\,\ref{fig:case3-2final}.
In the figure, robot $r_i$ moves to SE and robot $r_j$ moves to NE by the above discussion.
Then, the configuration become unconnected and 
robots cannot achieve gathering. 
Thus, a robot $r_q$ with one adjacent robot node NW must stay at the current node.
Therefore, robots cannot solve the problem from the configuration of Fig.\,\ref{fig:case3Mid}
and we have the following lemma.


\begin{lemma}
	\label{lem:noNEtoE}
	A robot with one adjacent robot node \textit{NE} cannot move to node E. 
\end{lemma}

\textbf{Case 4: a robot with one adjacent robot node NW moves to NE.} 
In this case, the following two robot behaviors are not allowed (Fig.\,\ref{fig:case4Counter}):
a robot with two adjacent robot nodes SW and E 
(resp., W and SE) moves to SE (resp., SW).
Then, in a configuration of Fig.\,\ref{fig:case4},
no robot can leave the current node and 
robots cannot achieve gathering.
Thus, we have the following lemma.


\begin{lemma}
	\label{lem:noNWtoNE}
	A robot with one adjacent robot node \textit{NW} cannot move to node \textit{NE}. 
\end{lemma}

\begin{figure}[t!]
	\begin{center}
		\includegraphics[scale=0.55]{{eps/case3-2final}}
		\caption{A configuration that becomes unconnected after 
			robots $r_i$ and $r_j$ move.}
		\label{fig:case3-2final}
	\end{center}
\end{figure}

\begin{figure}[t!]
	\begin{minipage}{0.48\hsize}
		\begin{center}
			\includegraphics[scale=0.55]{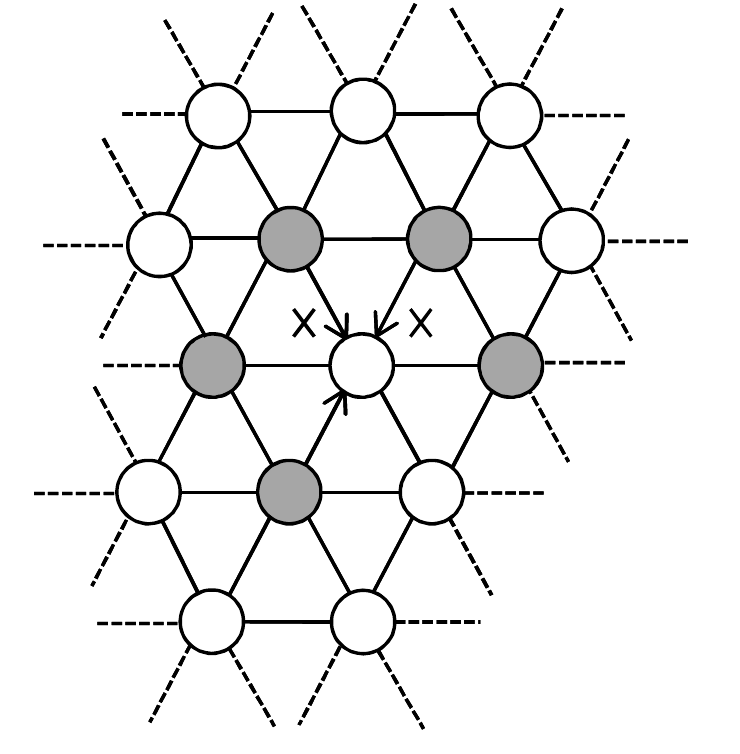}
			\caption{Prohibited behaviors when 
				a robot with one adjacent robot node NW moves to NE.}
			\label{fig:case4Counter}
		\end{center}
	\end{minipage}
	\begin{minipage}{0.01\hsize}
		\hspace{2mm}
	\end{minipage}
	\begin{minipage}{0.48\hsize}
		\begin{center}
			\includegraphics[scale=0.55]{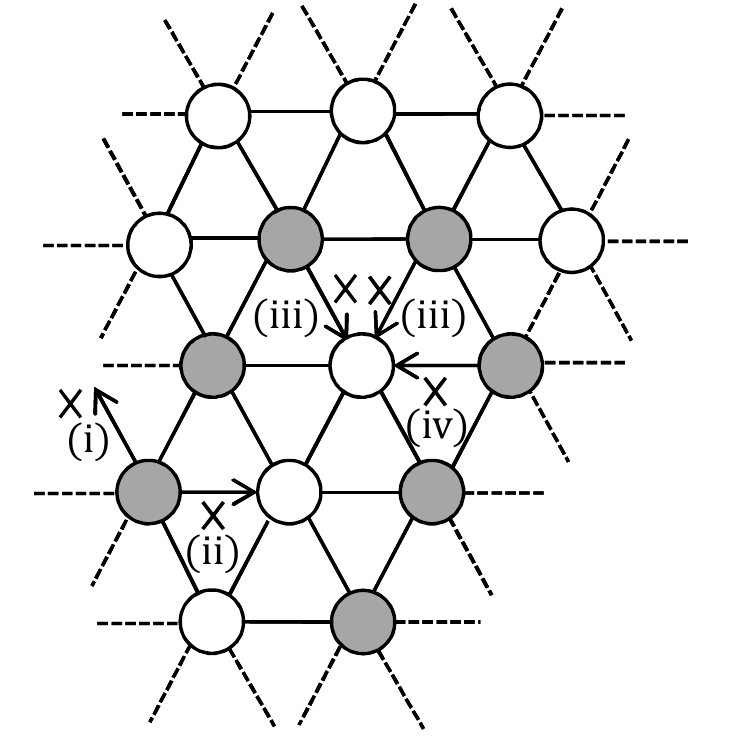}
			\caption{A configuration such that no robot can leave the current node when 
				a robot with one adjacent robot node NW moves to NE
				((i): by Fig.\,\ref{fig:assumptionCounter} (a), 
				(ii): by Lemma \ref{lem:noNEtoE},
				(iii): by Fig.\,\ref{fig:case4Counter}, 
				(iv): by Fig.\,\ref{fig:assumptionCounter} (b)).}
			\label{fig:case4}
		\end{center}
	\end{minipage}
\end{figure}

\begin{figure}[t!]
	\begin{center}
		\includegraphics[scale=0.55]{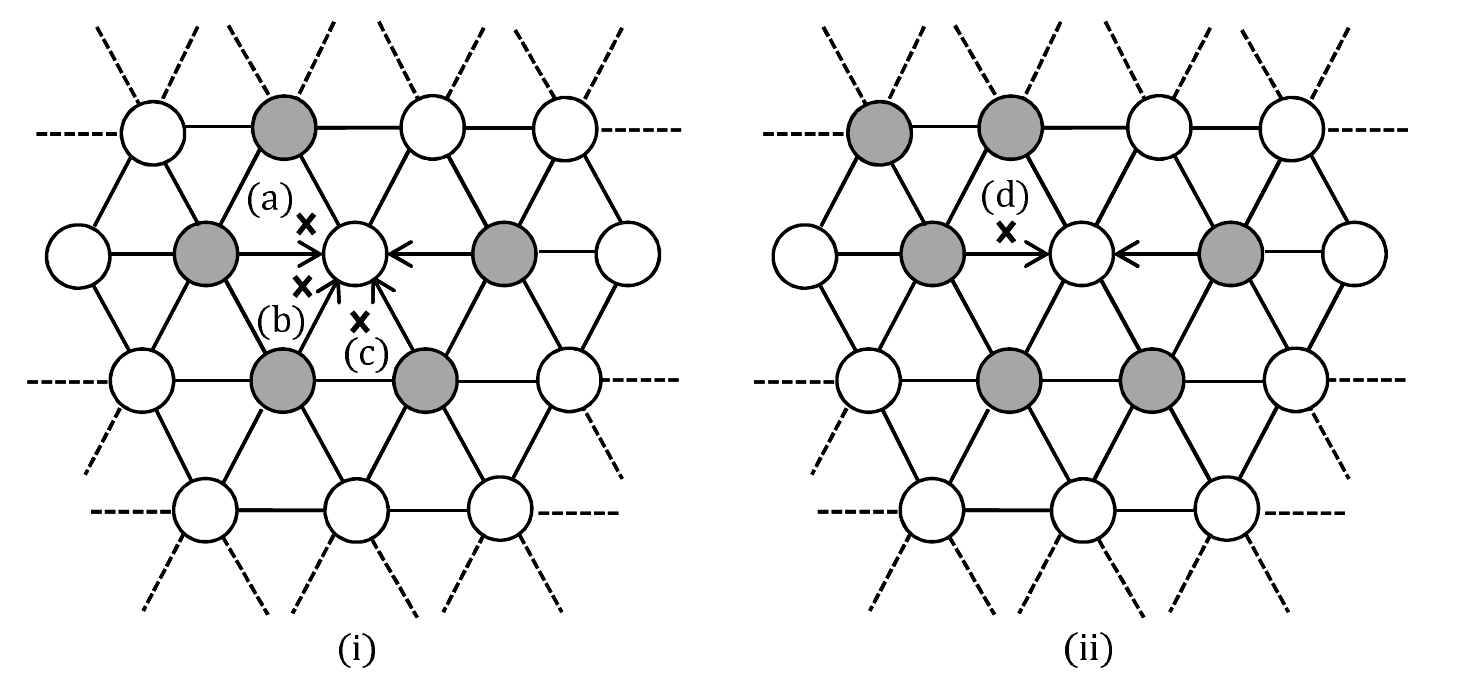}
		\caption{Prohibited behaviors when a robot with one adjacent robot node SW moves to W.}
		\label{fig:case5Counter}
	\end{center}
\end{figure}

\textbf{Case 5: a robot with one adjacent robot node SW moves to W.} 
In this case, the following four robot behaviors are not allowed (Fig.\,\ref{fig:case5Counter}):
(a): a robot with two adjacent robot nodes NE and SE moves to E,
(b): a robot with two adjacent robot nodes NW and E moves to NE,
(c): a robot with two adjacent robot nodes W and NE moves to NW, and 
(d): a robot with three adjacent robot nodes NE, NW, and SE moves to E.
Then, in a configuration of Fig.\,\ref{fig:case5Mid},
it is necessary that 
robot $r_p$ with two adjacent robot nodes SW and E moves to SE or 
robot $r_q$ with two adjacent robot nodes W and SE moves to SW.
We consider each of the behaviors and show for both the cases that 
robot cannot achieve gathering from some configuration.

\begin{figure}[t!]
	\begin{center}
		\includegraphics[scale=0.475]{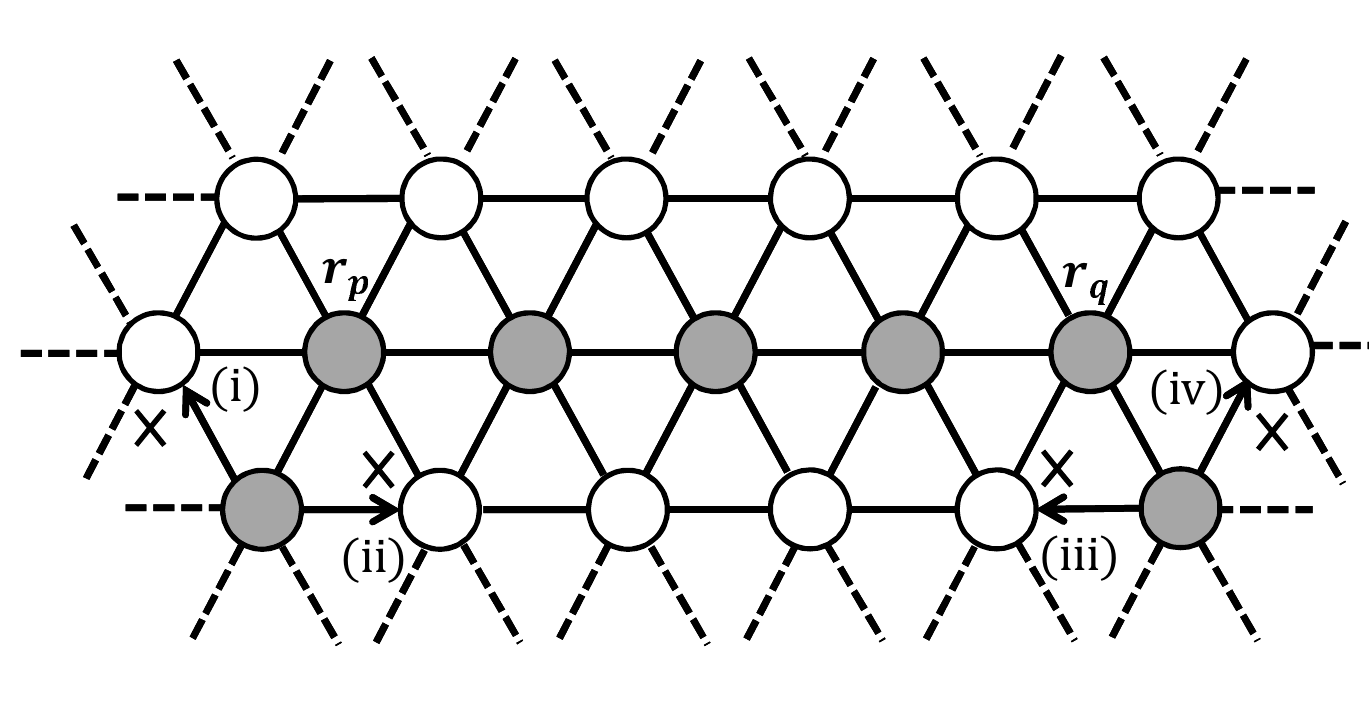}
		\caption{A  configuration such that 
			only robot $r_p$ with two adjacent robot nodes SW and E or 
			$r_q$ with two adjacent robot nodes W and SE can leave the current node				
			((i): by Fig \ref{fig:assumptionCounter} (a), 
			(ii): by Lemma \ref{lem:noNEtoE},
			(iii): by Lemma \ref{lem:noNWtoW}, and 
			(iv): by Lemma \ref{lem:noNWtoNE}).}
		\label{fig:case5Mid}
	\end{center}
\end{figure}

\begin{figure}[t!]
	\begin{center}
		\includegraphics[scale=0.55]{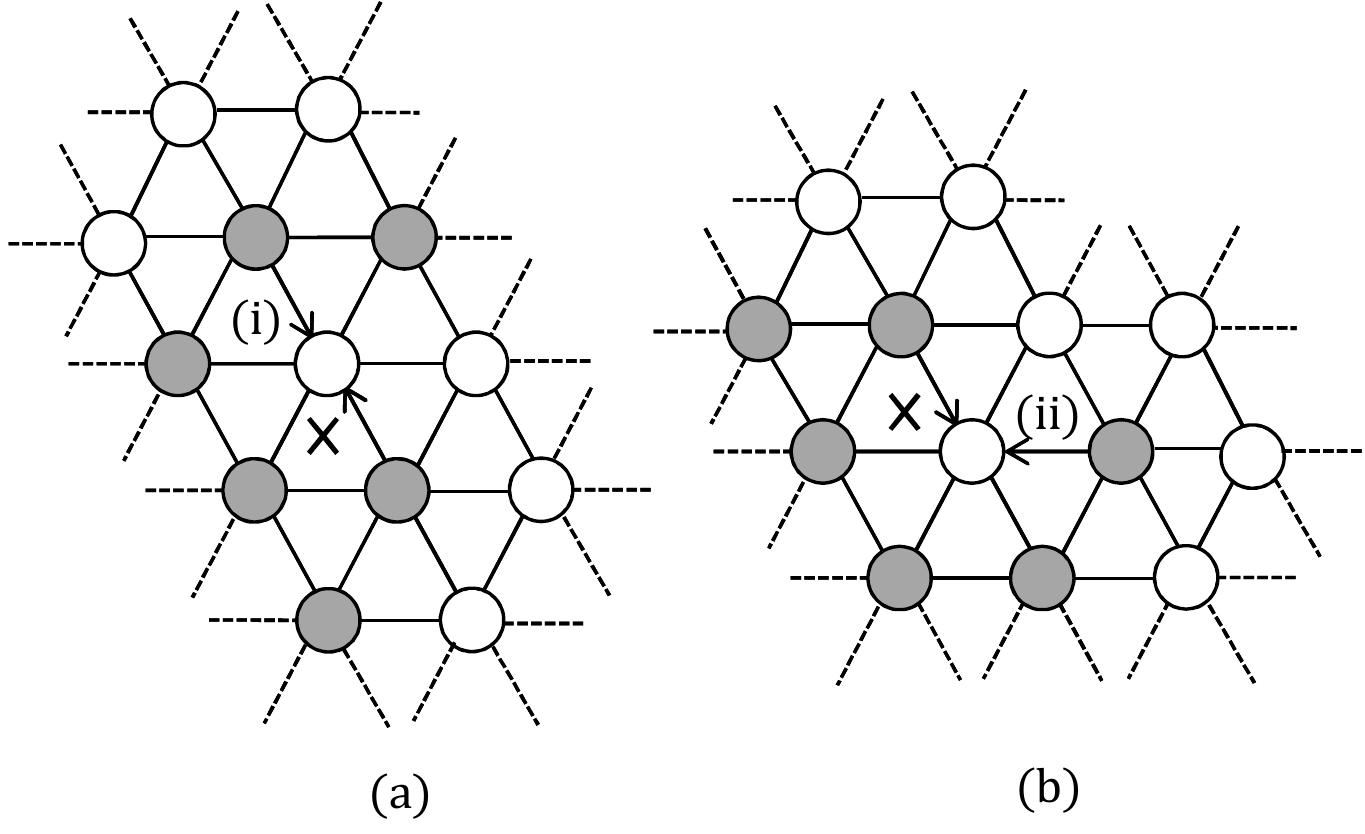}
		\caption{Examples showing that a robot with two adjacent robot nodes W and SW cannot leave the current node 
			((i): assumption of Case 5-1,
			(ii): assumption of Case 5).}
		\label{fig:case5-1Counter}
	\end{center}
\end{figure}

\if()

\begin{figure}[t!]
	\begin{minipage}{0.48\hsize}
		\begin{center}
			\includegraphics[scale=0.55]{eps/case5Counter}
			\caption{Prohibited behaviors 
				a robot with one adjacent robot node SW moves to W.}
			\label{fig:case5Counter}
		\end{center}
	\end{minipage}
	\begin{minipage}{0.01\hsize}
		\hspace{2mm}
	\end{minipage}
	\begin{minipage}{0.48\hsize}
		\begin{center}
			\includegraphics[scale=0.5]{eps/case5Mid}
			\caption{A  configuration such that 
				only robot $r_p$ with two adjacent robot nodes SW and E or 
				$r_q$ with two adjacent robot nodes W and SE can leave the current node				
				((i): by Fig \ref{fig:assumptionCounter} (a), 
				(ii): by Lemma \ref{lem:noNEtoE},
				(iii): by Lemma \ref{lem:noNWtoW}, and 
				(iv): by Lemma \ref{lem:noNWtoNE}).}
			\label{fig:case5Mid}
		\end{center}
	\end{minipage}
\end{figure}

\fi

\textit{Case 5-1: robot $r_q$ moves to SE.}
In this case, a robot with two adjacent robot nodes  SW and W cannot leave the current node
because otherwise a collision may occur (Fig.\,\ref{fig:case5-1Counter}).
Next, when considering a configuration of Fig.\,\ref{fig:case5-1Mid},
only robot $r_i$ with two adjacent robot nodes SE and SW can leave the current node.
However, in a configuration of Fig.\,\ref{fig:case5-1Counter2},
$r_i$ cannot move to E since a collision occurs.
Hence, $r_i$ needs to move to W.
Then, a robot $r_j$ with two adjacent robot nodes E and NE
cannot move to NW or SE because a collision occurs or the configuration become unconnected
(Fig.\,\ref{fig:case5-1Counter3}).
Finally, we consider the configuration of Fig.\,\ref{fig:case5-1final}.
In the figure, no robot can leave the current node and they cannot achieve gathering.
Thus, a robot $r_p$ with two adjacent robot nodes SW and E cannot move to SE 
and it must stay at the current node.


\begin{figure}[t!]
	\begin{minipage}{0.49\hsize}
		\begin{center}
			\includegraphics[scale=0.55]{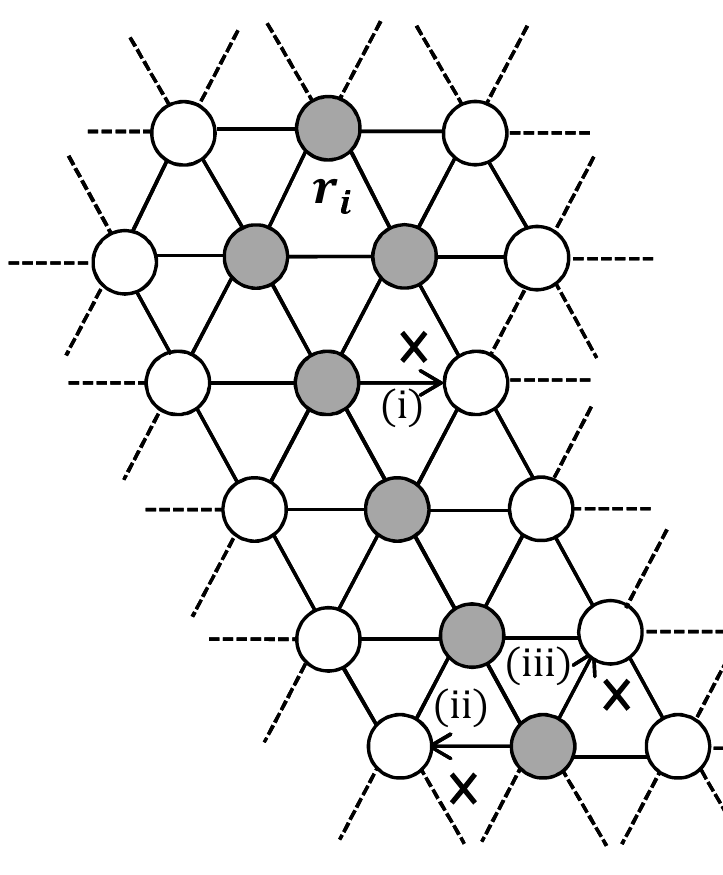}
			\caption{A configuration such that only robot $r_i$ with two adjacent robot nodes
				SE and SW can leave the current node
				((i): by Fig \ref{fig:case5Counter} (d), 
				(ii): by Lemma \ref{lem:noNWtoW},  
				(iii): by Lemma \ref{lem:noNWtoNE}).}
			\label{fig:case5-1Mid}
		\end{center}
	\end{minipage}
	\begin{minipage}{0.01\hsize}
		\hspace{2mm}
	\end{minipage}
	\begin{minipage}{0.47\hsize}
		\begin{center}
			\includegraphics[scale=0.55]{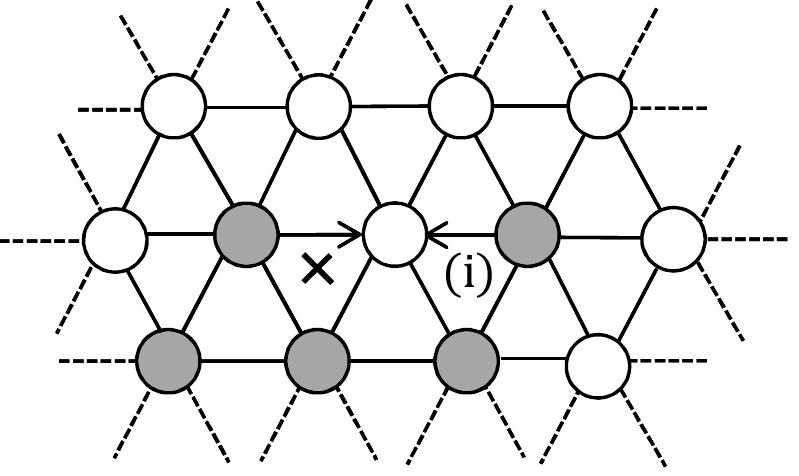}
			\caption{An example showing that a robot with two adjacent robot nodes 
				SE and SW cannot move to E. 
				((i): assumption of Case 5).}
			\label{fig:case5-1Counter2}
		\end{center}
	\end{minipage}
\end{figure}

\begin{figure}[t!]
	\begin{minipage}{0.48\hsize}
		\begin{center}
			\includegraphics[scale=0.55]{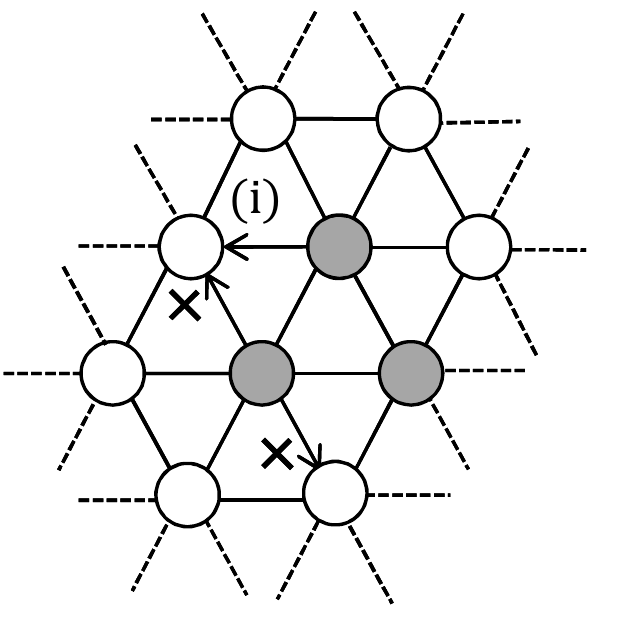}
			\caption{An example showing that a robot with two adjacent robot nodes E and NE
				cannot leave the current node
				((i): by Fig.\,\ref{fig:case5-1Counter2}).
			}
			\label{fig:case5-1Counter3}
		\end{center}
	\end{minipage}
	\begin{minipage}{0.01\hsize}
		\hspace{2mm}
	\end{minipage}
	\begin{minipage}{0.48\hsize}
		\begin{center}
			
			\includegraphics[scale=0.5]{{eps/case5-1final}}
			\caption{An unsolvable configuration when 
				a robot with two adjacent robot nodes SW and E moves to SE
				((i): by Fig.\,\ref{fig:case5-1Counter3},
				(ii): by Fig.\,\ref{fig:case5-1Counter} (a),
				(iii): by Fig.\,\ref{fig:case5-1Counter} (b),
				(iv): by Lemma \ref{lem:noNWtoW},  
				(v): by Lemma \ref{lem:noNWtoNE}).}
			\label{fig:case5-1final}
		\end{center}
	\end{minipage}
\end{figure}

\begin{figure}[t!]
	\begin{center}
		\includegraphics[scale=0.55]{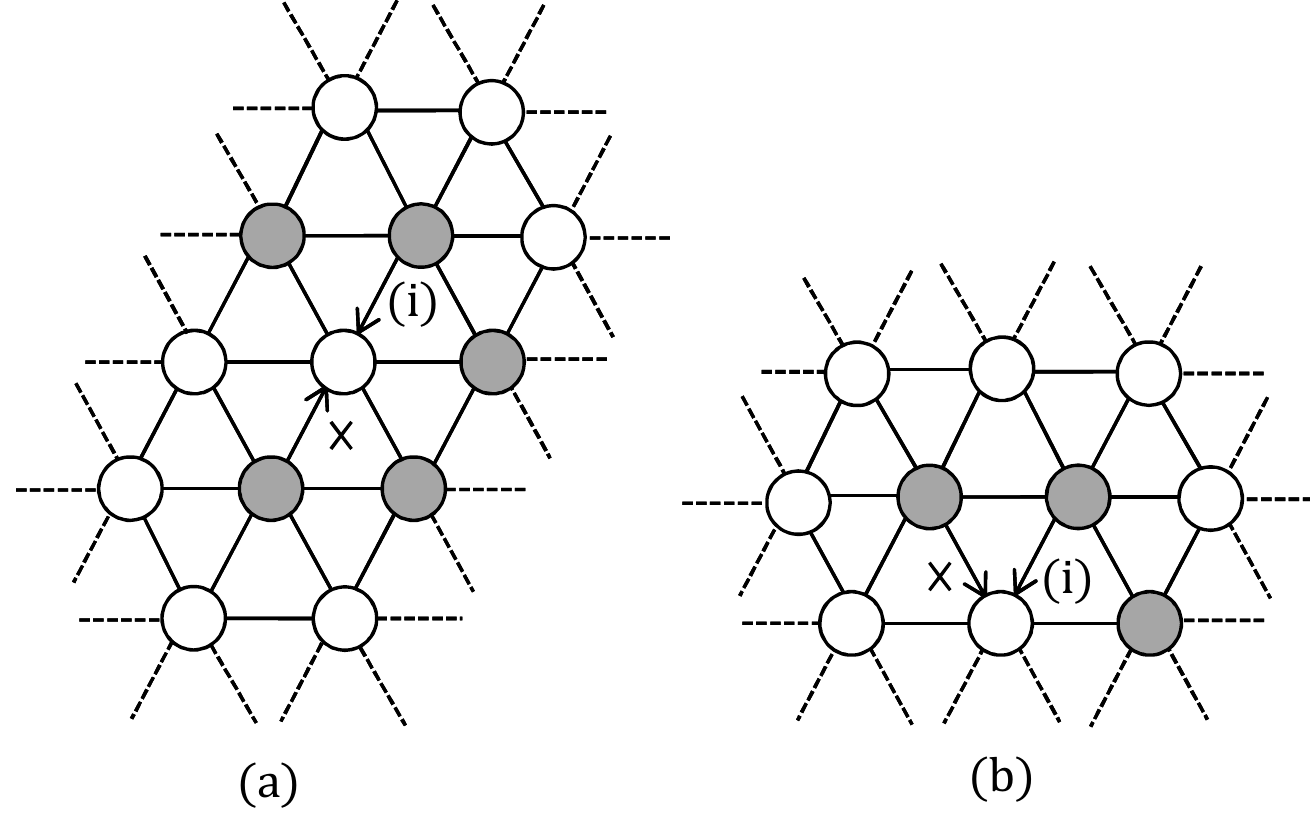}
		\caption{Examples showing that a robot with two adjacent robot nodes E cannot leave the current node
			((i): assumption of Case 5-2).}
		\label{fig:case5-2counter}
	\end{center}
\end{figure}

\if()
\begin{figure}[t!]
	\begin{center}
		\includegraphics[scale=0.5]{{eps/case5-1final}}
		\caption{An unsolvable configuration when
			a robot with two adjacent robot nodes SW and E moves to SE
			((i): by Fig.\,\ref{fig:case5-1Counter3},
			(ii): by Fig.\,\ref{fig:case5-1Counter} (a),
			(iii): by Fig.\,\ref{fig:case5-1Counter} (b),
			(iv): by Lemma \ref{lem:noNWtoW},  
			(v): by Lemma \ref{lem:noNWtoNE}).}
		\label{fig:case5-1final}
	\end{center}
\end{figure}
\fi

\begin{figure}[t!]
\begin{center}
	\includegraphics[scale=0.45]{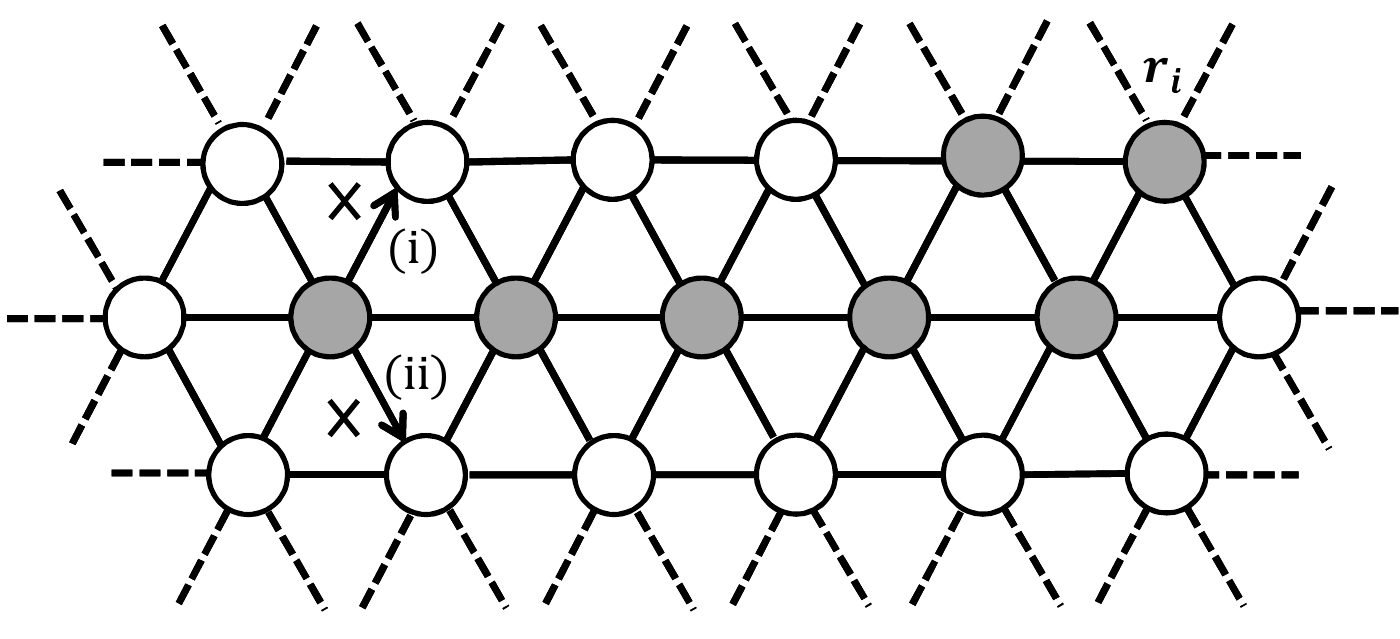}
	\caption{A configuration such only robot $r_i$ with two adjacent robot nodes W and SW
		can leave the current node 
		((i): by Fig.\,\ref{fig:case5-2counter} (a),
		(ii): by Fig.\,\ref{fig:case5-2counter} (b)).}
	\label{fig:case5-2Mid}
\end{center}
\end{figure}

\begin{figure}[t!]
\begin{minipage}{0.48\hsize}
	\begin{center}
		\includegraphics[scale=0.525]{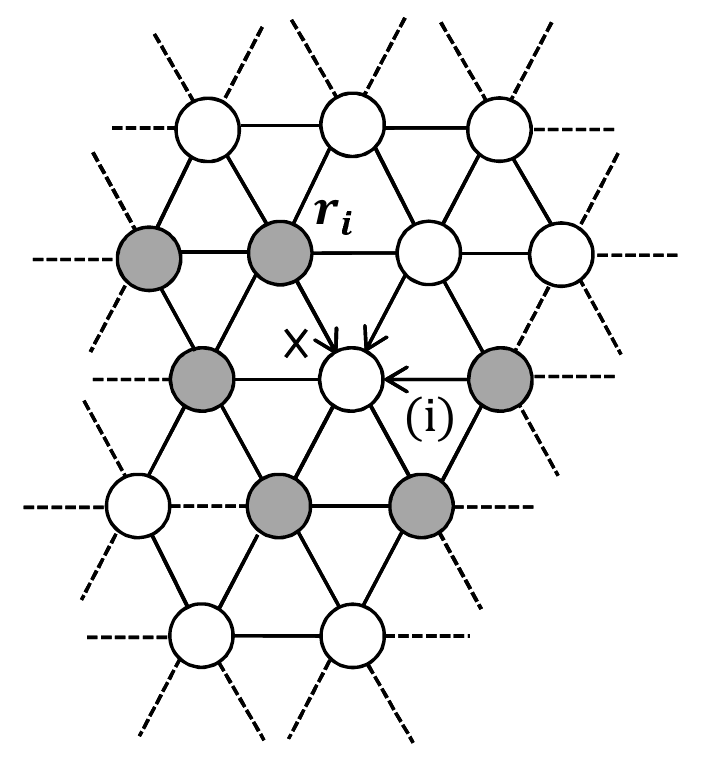}
		\caption{An example showing that a robot with two adjacent robot nodes SW and W
			cannot move to SE
			((i): assumption of Case 5).}
		\label{fig:case5-2counter2}
	\end{center}
\end{minipage}
\begin{minipage}{0.01\hsize}
	\hspace{2mm}
\end{minipage}
\begin{minipage}{0.48\hsize}
	\begin{center}
		\includegraphics[scale=0.525]{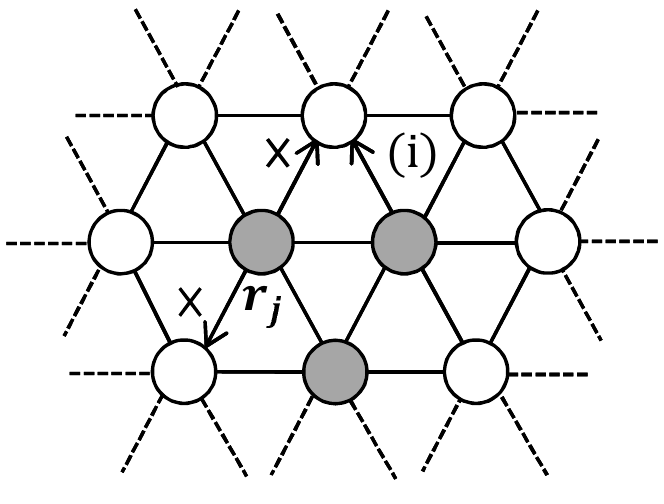}
		\caption{An example showing that a robot with two adjacent robot nodes E and SE 
			cannot leave the current node
			((i): by Fig.\,\ref{fig:case5-2counter2}).}
		\label{fig:case5-2counter3}
	\end{center}
\end{minipage}
\end{figure}

\textit{Case 5-2: robot $r_q$ moves to SW.}
In this case, a robot with one adjacent robot node E cannot leave the current node 
because otherwise a collision may occur (Fig.\,\ref{fig:case5-2counter}).
Then, when considering a configuration of Fig.\,\ref{fig:case5-2Mid},
only robot $r_i$ with two adjacent robot nodes SW and W can leave the current node.
However, $r_i$ cannot move to SE since a collision occurs in a configuration of Fig.\,\ref{fig:case5-2counter2}.
Hence, $r_i$ needs to move to NW.
Then, robot $r_j$ with two adjacent robot nodes E and SE cannot leave the current node 
because a collision occurs or the  configuration becomes unconnected (Fig.\,\ref{fig:case5-2counter3}).
Next, we consider the configuration of Fig.\,\ref{fig:case5-2Mid2}.
In the figure, only robot $r_i$ with three adjacent robot nodes SE, W, and NW can leave the current node 
and it needs to move to SW by the previous discussions.
Then, 
a robot $r_j$ with four adjacent robot nodes E, SW, NW, and NE cannot leave the current node
since a collision occurs in a configuration of Fig.\,\ref{fig:case5-2counter4}.
Thus, when considering  the  configuration of Fig.\,\ref{fig:case5-2Mid3},
only robot $r_i$ with two adjacent robot nodes W and NW can leave the current node.
However, $r_i$ cannot move to NE since a collision occurs in a configuration of Fig.\,\ref{fig:case5-2counter5}.
Thus, $r_i$ needs to move to SW.
Then, a robot with one adjacent robot node W cannot leave the current node 
because otherwise  a collision occurs in configurations of Fig.\,\ref{fig:case5-2counter6}.
Finally, when considering the configuration of Fig.\,\ref{fig:case5-2final},
no robot can leave the current node and robots cannot achieve gathering, which is a contradiction.
Thus, a robot  $r_q$ with two adjacent robot nodes W and SE cannot move to SW.
Therefore, we have the following lemma. 

\if()
\begin{figure}[t!]
	\begin{minipage}{0.48\hsize}
		\begin{center}
			\includegraphics[scale=0.55]{eps/case5-2counter1}
			\caption{Examples showing that a robot with two adjacent robot nodes E cannot leave the current node
				((i): assumption of Case 5-2).}
			\label{fig:case5-2counter}
		\end{center}
	\end{minipage}
	\begin{minipage}{0.01\hsize}
		\hspace{2mm}
	\end{minipage}
	\begin{minipage}{0.48\hsize}
		\begin{center}
			\includegraphics[scale=0.55]{eps/case5-2Mid}
			\caption{A configuration such only robot $r_i$ with two adjacent robot nodes W and SW
				can leave the current node 
				((i): by Fig.\,\ref{fig:case5-2counter} (a),
				(ii): by Fig.\,\ref{fig:case5-2counter} (b)).}
			\label{fig:case5-2Mid}
		\end{center}
	\end{minipage}
\end{figure} 
\fi

\begin{figure}[t!]
	\begin{minipage}{0.48\hsize}
		\begin{center}
			\includegraphics[scale=0.45]{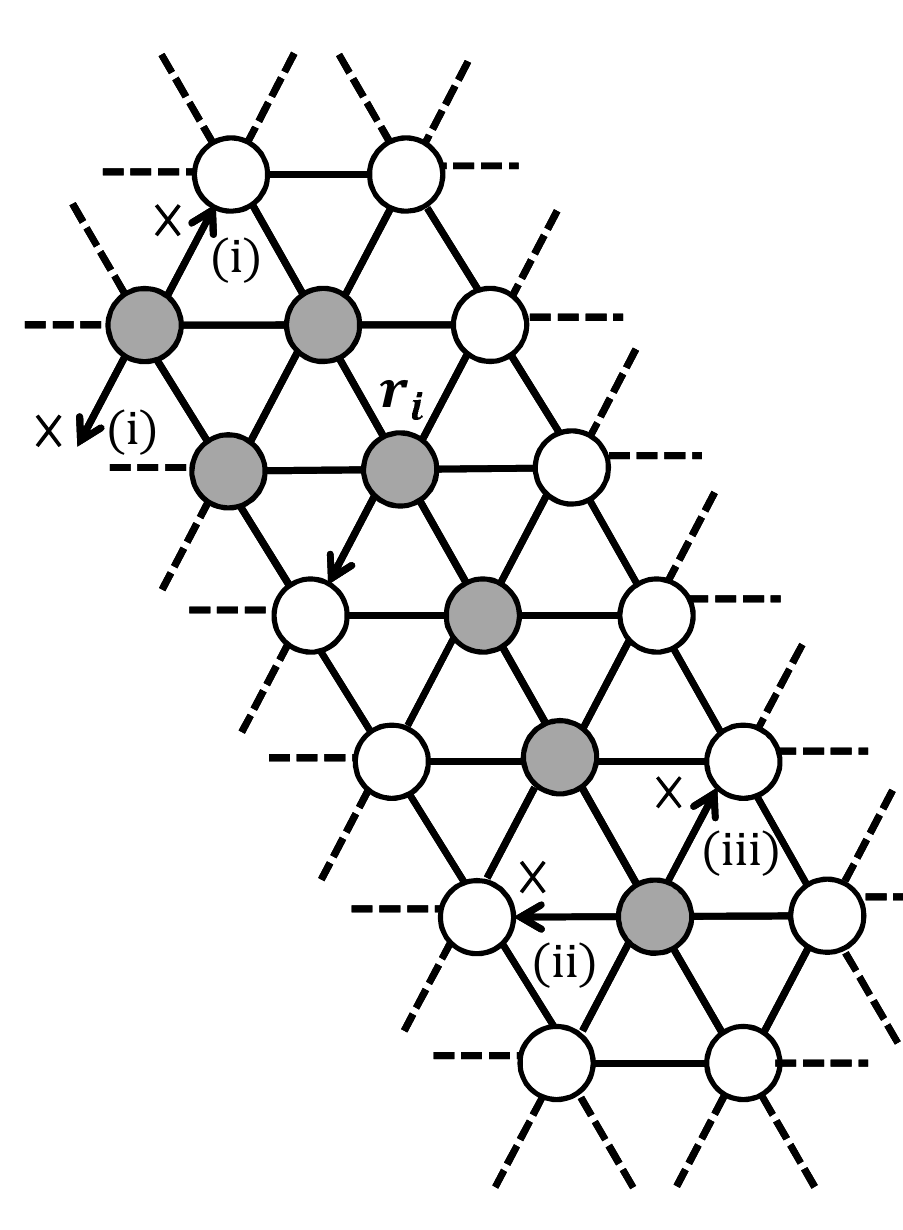}
			\caption{A configuration such that only robot $r_i$ with three adjacent robot nodes SE, W, and NW
				can leave the current node
				((i): by Fig.\,\ref{fig:case5-2counter3},
				(ii): by Lemma \ref{lem:noNWtoW},
				(iii): by Lemma \ref{lem:noNWtoNE}).}
			\label{fig:case5-2Mid2}
		\end{center}
	\end{minipage}
	\begin{minipage}{0.01\hsize}
		\hspace{2mm}
	\end{minipage}
	\begin{minipage}{0.48\hsize}
		\begin{center}
			\includegraphics[scale=0.50]{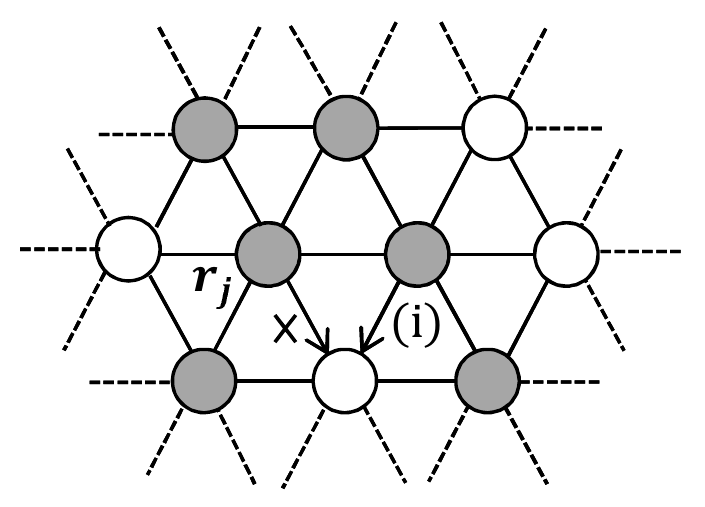}
			\caption{An example showing that a robot $r_j$ with four adjacent robot nodes E, SW, NW, and NE 
				cannot move to SE
				((i): by Fig.\,\ref{fig:case5-2Mid2}).}
			\label{fig:case5-2counter4}
		\end{center}
	\end{minipage}
\end{figure} 

\begin{figure}[t!]
	\begin{minipage}{0.48\hsize}
		\begin{center}
			\includegraphics[scale=0.55]{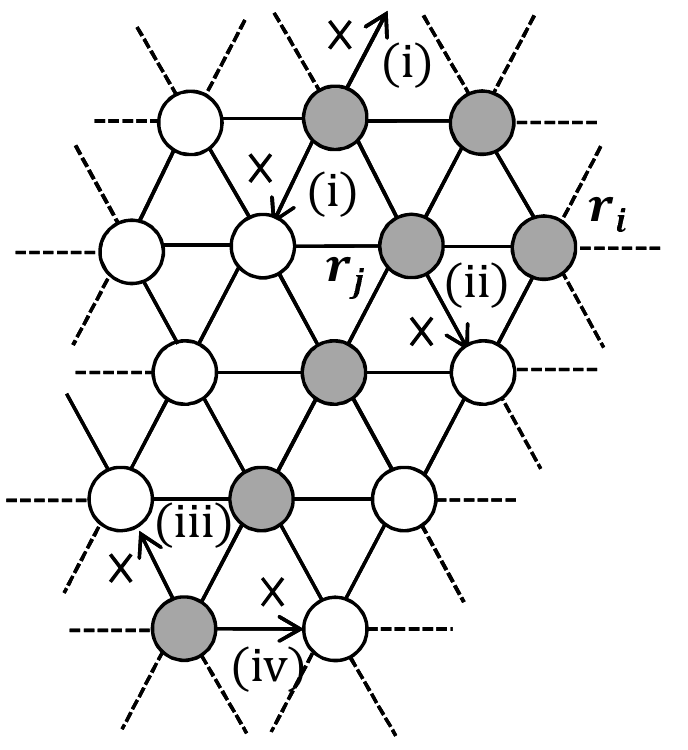}
			\caption{A configuration such that only robot $r_i$ with two adjacent robot nodes  W and NW
				can leave the current node
				((i): by Fig.\,\ref{fig:case5-2counter3},
				(ii): by Fig.\,\ref{fig:case5-2counter4},				
				(iii): by Fig.\,\ref{fig:assumptionCounter} (a)
				(iv): by Lemma \ref{lem:noNEtoE}).}
			\label{fig:case5-2Mid3}
		\end{center}
	\end{minipage}
	\begin{minipage}{0.01\hsize}
		\hspace{2mm}
	\end{minipage}
	\begin{minipage}{0.48\hsize}
		\begin{center}
			\includegraphics[scale=0.55]{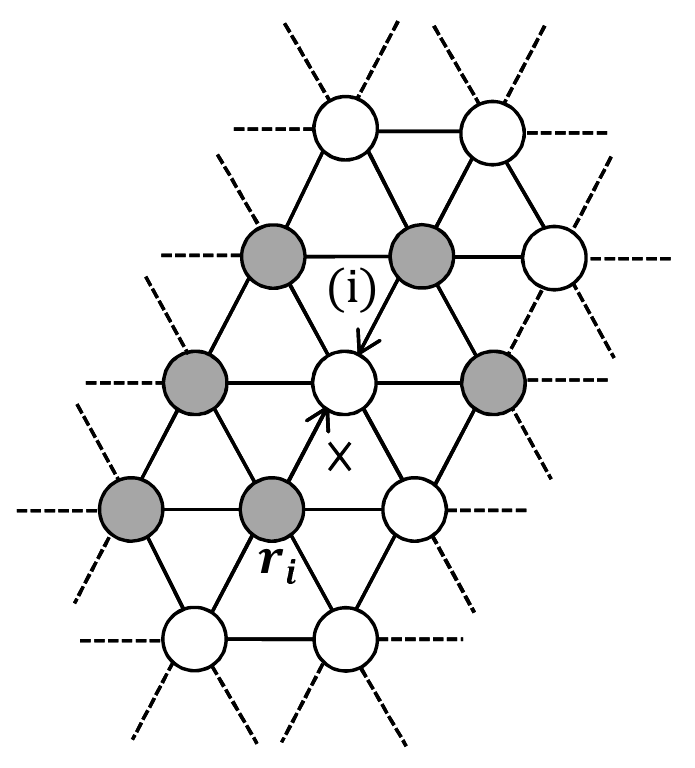}
			\caption{An example showing that a robot $r_i$ with two adjacent robot nodes W and  NW 
				cannot move to NE
				((i): assumption of Case 5-2).}
			\label{fig:case5-2counter5}
		\end{center}
	\end{minipage}
\end{figure} 

\if()
\begin{figure}[t!]
	\begin{minipage}{0.48\hsize}
		\begin{center}
			\includegraphics[scale=0.525]{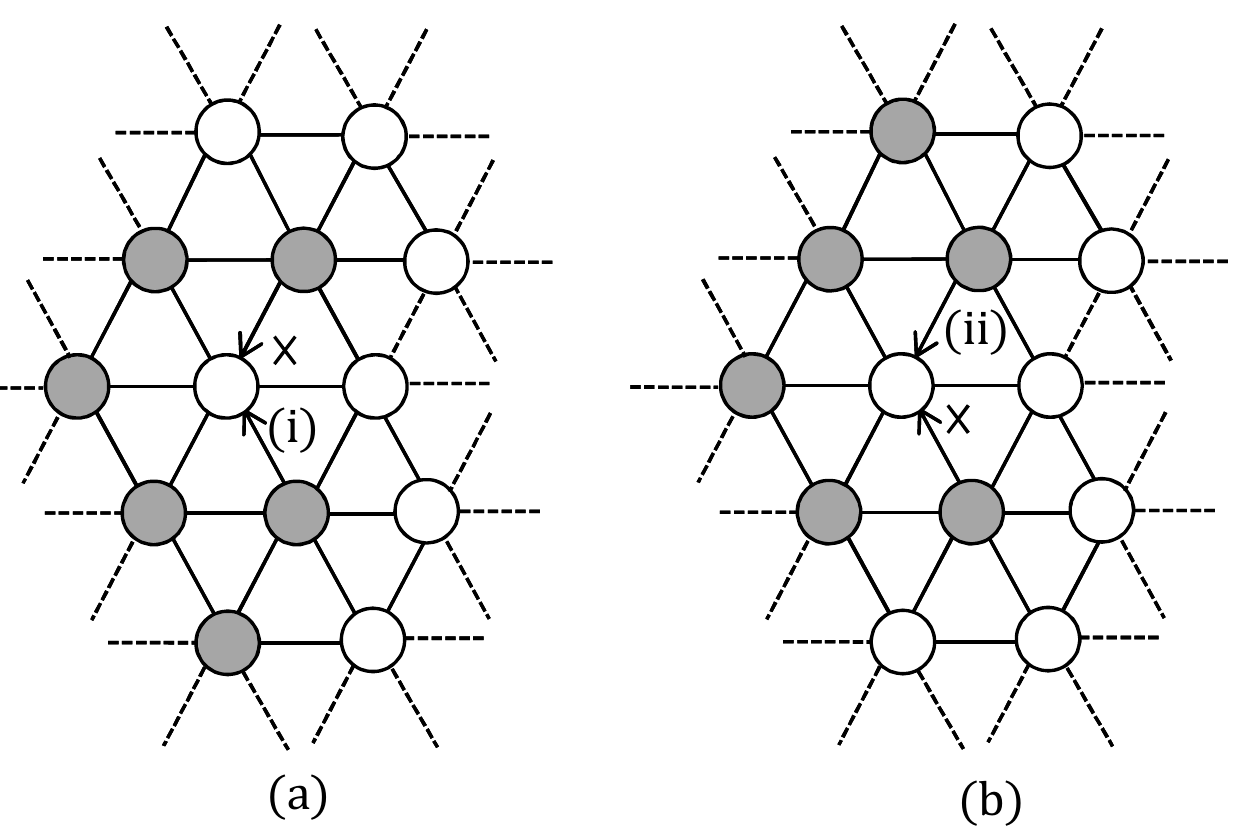}
			\caption{Examples showing that a robot with one adjacent robot node W
				cannot leave the current node
				((i): by Fig.\,\ref{fig:case5-2counter2},
				(ii): by Fig.\,\ref{fig:case5-2counter5}).
			}
			\label{fig:case5-2counter6}
		\end{center}
	\end{minipage}
	\begin{minipage}{0.01\hsize}
		\hspace{2mm}
	\end{minipage}
	\begin{minipage}{0.48\hsize}
		\begin{center}
			\includegraphics[scale=0.525]{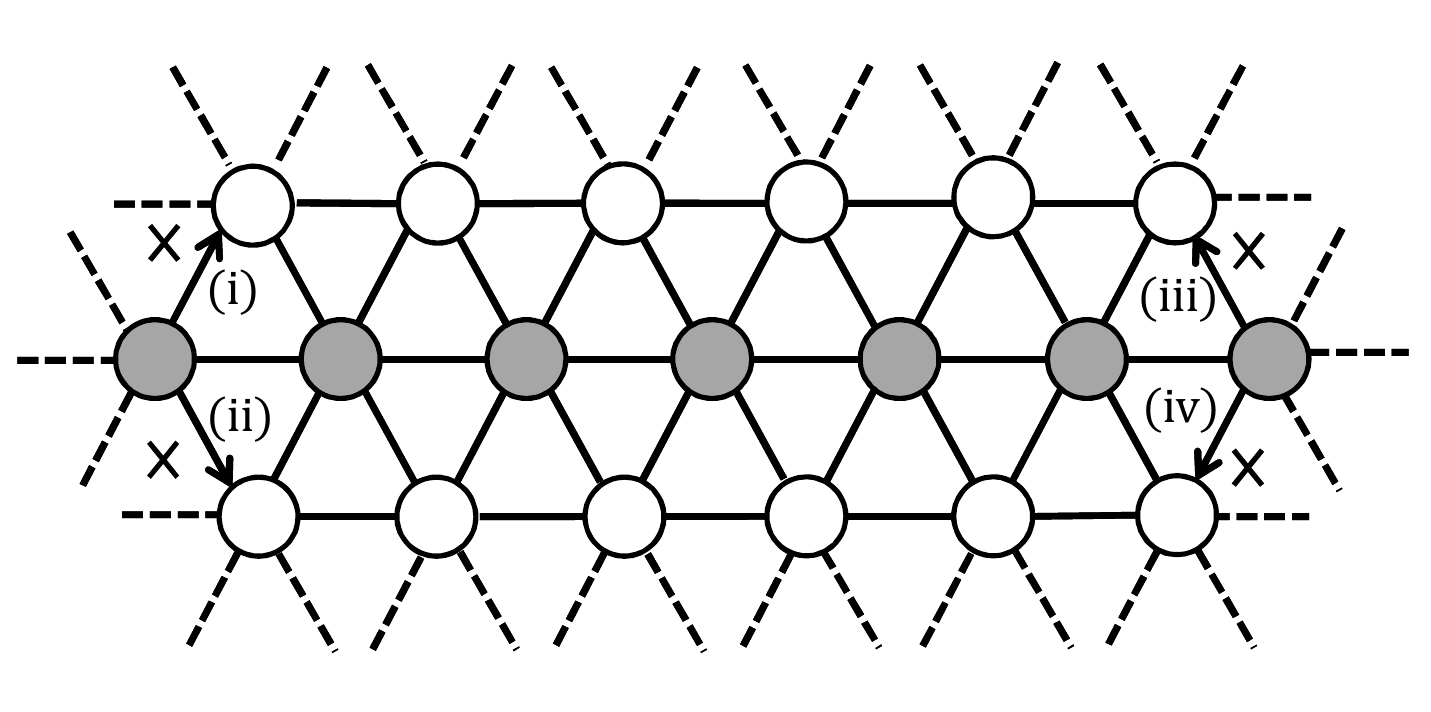}
			\caption{An unsolvable configuration when 
				a robot with two adjacent robot nodes W and SE moves to SW
				((i): by Fig.\,\ref{fig:case5-2counter} (a),
				(ii): by Fig.\,\ref{fig:case5-2counter} (b),
				(iii): by Fig.\,\ref{fig:case5-2counter6} (b),
				(iv): by Fig.\,\ref{fig:case5-2counter6} (a)).}
			\label{fig:case5-2final}
		\end{center}
	\end{minipage}
\end{figure} 
\fi

\begin{figure}[t!]
	\begin{center}
		\includegraphics[scale=0.525]{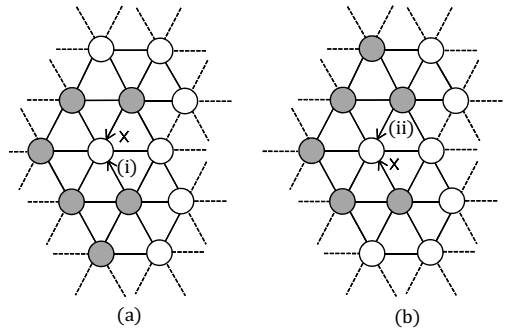}
		\caption{Examples showing that a robot with one adjacent robot node W
			cannot leave the current node
			((i): by Fig.\,\ref{fig:case5-2counter2},
			(ii): by Fig.\,\ref{fig:case5-2counter5}).}
		\label{fig:case5-2counter6}
	\end{center}
\end{figure}

\begin{figure}[t!]
	\begin{center}
		\includegraphics[scale=0.475]{eps/case5-2final}
		\caption{An unsolvable configuration when 
			a robot with two adjacent robot nodes W and SE moves to SW
			((i): by Fig.\,\ref{fig:case5-2counter} (a),
			(ii): by Fig.\,\ref{fig:case5-2counter} (b),
			(iii): by Fig.\,\ref{fig:case5-2counter6} (b),
			(iv): by Fig.\,\ref{fig:case5-2counter6} (a)).}
		\label{fig:case5-2final}
	\end{center}
\end{figure}

\begin{lemma}
	\label{lem:noSWtoW}
	A robot with one adjacent robot node \textit{SW} cannot move to node W. 
\end{lemma}


Thus, by Proposition \ref{pro:SEtoSW}-(a), Lemmas \ref{lem:intermediate}, \ref{lem:noSWtoSE}, \ref{lem:noNEtoE}, and \ref{lem:noNWtoNE},
robots cannot achieve gathering from a configuration of Fig.\,\ref{fig:slash} (b)
and we have the theorem. 
\end{proof}

\if()

Similarly to the proofs of Lemmas \ref{lem:noNWtoW} and \ref{lem:noSWtoSE},
in the remaining cases we can have the following lemmas 
by showing several prohibited robot behaviors and 
a configuration from which robots cannot achieve gathering 
(the detailed proofs are given in \cite{aiXive}).

\begin{lemma}
	\label{lem:noNEtoE}
		A robot with one adjacent robot node \textit{NE} cannot move to node E. 
	\end{lemma}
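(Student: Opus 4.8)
The plan is to argue by contradiction inside the running assumption of Theorem~\ref{theo:insolvability}: on top of the global hypothesis that $\cal{A}$ makes a robot whose unique adjacent robot node lies at SE move to SW, I suppose additionally that $\cal{A}$ lets some robot whose unique adjacent robot node lies at NE move to E. By Collorary~\ref{collo:oneNeighbor} such a robot may move only to NW or E, so ruling out the move to E is exactly the content of the lemma. The goal is to exhibit a connected configuration from which no robot can make a legal (collision-free, connectivity-preserving) move that also makes progress, contradicting solvability.

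First I would record the collisions forced by the two assumptions acting together. Because the NE-neighbour robot heads to E, any robot that would also target that same cell collides; concretely I expect that a robot with adjacent robot nodes SW and E cannot move to SE, and a robot with adjacent robot nodes W and SE cannot move to SW (the analogue of Fig.~\ref{fig:case3Counter}). I would then assemble a pivotal configuration (the analogue of Fig.~\ref{fig:case3Mid}) and prune every candidate move using results already proved: Lemma~\ref{lem:intermediate} freezes robots whose two adjacent robot nodes are collinear; Lemma~\ref{lem:noNWtoW} and Lemma~\ref{lem:noSWtoSE} remove the remaining escape routes for several low-degree robots; and Collorary~\ref{collo:twoNeighbor} pins down the unique admissible direction for the degree-two robots. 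After this pruning the only robots that can still move should be a robot $r_p$ with adjacent robot nodes NE and SE (which could go to E) and a robot $r_q$ with sole adjacent robot node NW (which could go to NE); since the configuration is not gathered, at least one of them must move, so it suffices to refute both sub-cases.

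In the sub-case where $r_p$ moves to E the idea is to bootstrap the restriction upward in degree. This move first forbids a robot with sole adjacent robot node NW from going to NE and a robot with sole adjacent robot node W from going to NW or SW, and then forces a robot with adjacent robot nodes SW and SE to stay; embedding these facts into successively larger neighbourhoods I would show that a robot with three adjacent robot nodes NW, NE, and SE is compelled to move to E, that a robot with four adjacent robot nodes E, SW, W, and NW must stay, and finally reach a configuration (the analogue of Fig.~\ref{fig:case3-1final}) that is completely frozen, establishing that a robot with adjacent robot nodes NE and SE, and by the mirrored argument one with adjacent robot nodes NE, SE, and SW, must stay. In the sub-case where $r_q$ moves to NE I would instead drive the system into a disconnecting move: the pruning leaves a robot $r_i$ with adjacent robot nodes E and NE forced to SE and a robot $r_j$ with adjacent robot nodes NW and W forced to NE, and in a configuration where both act simultaneously (the analogue of Fig.~\ref{fig:case3-2final}) the subgraph induced by the robot nodes splits, violating connectivity. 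As both sub-cases are impossible, no robot may leave the pivotal configuration, contradicting solvability and yielding the lemma.

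I expect the main obstacle to be the sub-case where $r_p$ moves to E: unlike the disconnection shortcut available in the other sub-case, here I must propagate the must-move-to-E versus must-stay dichotomy through robots of degree two, three, and four, each time re-embedding the local picture into a consistent global connected configuration and verifying that no alternative move survives the collision and connectivity constraints. Getting the auxiliary configurations right, so that exactly one robot is mobile at each step and its forced move recreates or freezes the pattern, is where the delicate case analysis lives; Collorary~\ref{collo:oneNeighbor}, Collorary~\ref{collo:twoNeighbor}, and the three previously established lemmas are precisely what make each pruning step mechanical rather than open-ended.
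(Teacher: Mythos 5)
Your proposal is correct and follows essentially the same route as the paper's Case~3 argument for Lemma~\ref{lem:noNEtoE}: the same two prohibited behaviors (Fig.~\ref{fig:case3Counter}), the same pivotal configuration (Fig.~\ref{fig:case3Mid}) in which only $r_p$ (adjacent robot nodes NE and SE, candidate move E) or $r_q$ (sole adjacent robot node NW, candidate move NE) can act, and the same two sub-case refutations --- a fully frozen configuration reached by bootstrapping the restriction through degree-two, -three, and -four robots (Figs.~\ref{fig:case3-1Counter2}--\ref{fig:case3-1final}) in one branch, and a simultaneous pair of forced moves that disconnects the configuration (Fig.~\ref{fig:case3-2final}) in the other. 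Your anticipated difficulty (propagating the must-move/must-stay dichotomy in the $r_p$-to-E branch) is exactly where the paper spends its effort, so no gap to report.
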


\begin{lemma}
	\label{lem:noNWtoNE}
	A robot with one adjacent robot node \textit{NW} cannot move to node \textit{NE}. 
\end{lemma}

\begin{lemma}
	\label{lem:noSWtoW}
	A robot with one adjacent robot node \textit{SW} cannot move to node W. 
\end{lemma}

Thus, by Proposition \ref{pro:SEtoSW}-(a) and 
Lemmas \ref{lem:intermediate}, \ref{lem:noSWtoSE}, \ref{lem:noNEtoE}, 
and \ref{lem:noSWtoW}, robots cannot achieve gathering from the configuration of
Fig.\,\ref{fig:slash} (b).
This contradicts the assumption that 
there exists a collision-free algorithm $\cal{A}$ to solve the gathering problem.
Therefore, the theorem follows. 

\fi

\if()
\par

\textit{Remark.}
We showed impossibility of gathering for seven mobile robots 
which requires that robots form 
a hexagon with radius 1.
This result also holds for gathering of the specific number of  robots such that 
they form a hexagon with radius of a positive integer after gathering.
\fi

\section{Robots with visibility range 2}
\label{sec:algo}
In this section, for robots with visibility range 2, 
we propose a collision-free algorithm to solve the gathering problem 
from any connected initial configuration.

\subsection{Proposed algorithm}
The basic idea is that each robot firstly determines the \textit{base node} that is 
the rightmost robot node within its visibility range and then
it moves toward the base node to achieve gathering.
First, we explain how to determine the base (or rightmost) robot.
For explanation, in the following we assume that each robot $r_i$
recognizes that it is located at an origin and it assigns labels to each node within
its visibility range like Fig.\,\ref{fig:label}.
In the figure, the first (resp., second) element of each label is called 
the \textit{$x$-element} (resp., \textit{$y$-element})\footnote{
	Labels are assigned for explanation and they are a little different from the coordinate system.
	For example, the difference between labels (0,0) and (2,0) is 2 but 
	the distance between node (0,0) and node (2,0) is 1.}.
Then, $r_i$ determines the robot node with the largest $x$-element as the base node 
(possibly the robot node where $r_i$ itself stays).
If several robot nodes have the largest $x$-element, 
$r_i$ does not determine the base node at that time and 
waits at the current node until the configuration changes.
As exceptions, if node (4,0) is an empty node and nodes (3,1) and (3,-1) are robot nodes, 
$r_i$ determines node (4,0) as the base node to avoid the configuration such that 
no robot determines a base node and each robot waits at the current node.
In addition, if robot nodes (1,1) and (1,-1) have the largest $x$-element 
among all the labels of robot nodes within $r_i$'s visibility range, and 
$r_i$ moves to node (2,0) so that it becomes a base.
Examples are given in Fig.\,\ref{fig:base}.

\if()
\begin{figure}[t!]
	\begin{center}
	\includegraphics[scale=0.7]{{eps/avoidCollision}}
	\caption{An example to avoid a collision using ordinal numbers.}
	\label{fig:avoidCollision}
	\end{center}
\end{figure}
\fi

\begin{figure}[t!]
	\centering
	\includegraphics[scale=0.475]{{eps/label}}
	\caption{Assignment of labels.}
	\label{fig:label}
\end{figure}

\begin{figure}[t!]
	\centering
	\includegraphics[scale=0.7]{{eps/base}}
	\caption{Examples of how to determine  the base nodes
		((a): node $v_b$ is the base node, 
		(b): $r_i$ does not determine the base node,
		(c): $r_i$ determines $v_b$ as the base node and moves there).}
	\label{fig:base}
\end{figure}

\begin{figure}[t!]
	\centering
	\includegraphics[scale=0.725]{{eps/movingRule}}
	\caption{Movement rules 
		((a): candidate nodes to visit, (b): ordinal numbers).}
	\label{fig:rule}
\end{figure}

Next, we explain how to achieve gathering based on the base node.
Robots consider the base node as the rightmost node of a gathering-achieved configuration and 
they basically move east 
on a triangular grid 
with avoiding a collision and an unconnected configuration.
Concretely, if the label of the base node from robot $r_i$ is
(2,-2), (3,-1), (4,0), (3,1), or (2,2), 
it moves to one of adjacent nodes as indicated in Fig.\,\ref{fig:rule} (a) using 
ordinal numbers in Fig.\,\ref{fig:rule} (b).
That is, among the candidate nodes that $r_i$ may visit in the next cycle,
$r_i$ moves to the empty adjacent node with the smallest ordinal number.
If several robots try to move to the same node $v_j$, 
the robot staying at the node with the largest ordinal number moves to $v_j$. 
If all the candidate nodes are robot nodes, $r_i$ stays at the current node.
For example, in Fig.\,\ref{fig:avoidCollision},
robots $r_i$ and $r_j$ consider the common node $v_b$ as the base node,
$r_i$ (resp., $r_j$) has two candidate nodes $v_n$ and $v_\ell$ 
(resp., $v_m$ and $v_\ell$) to visit,
$v_n$ (resp., $v_m$) is a robot node, and hence it tries to visit $v_\ell$ (resp., $v_\ell$).
In this case, since the ordinal number 4 of the node where  $r_i$ stays is larger than 
the ordinal number 3 of the node where  $r_j$ stays, 
$r_i$ moves to $v_\ell$ and $r_j$ stays at the current node.
If two robots consider the common node as the base node like the above  example,
they can share the common ordinal numbers and can avoid a collision or an unconnected configuration.
However, it is possible that 
some two robots consider different robot nodes as their base nodes due to 
their  limited visibility range, which may cause a collision or an unconnected configuration.
For example, in Fig.\,\ref{fig:avoidCollision2},
robot $r_i$ considers $v'_b$ as the base node but $r_j$ considers $v_b$ as the base node, and 
they try to move to the same node $v_\ell$ according to the movement rule.
In this case, the robot with the smaller $x$-element of the node label
moves to the node and the other robot stays at the current node.
Hence, in Fig.\,\ref{fig:avoidCollision2},
$r_i$ moves to $v_\ell$ and $r_j$ stays at the current node.
Moreover, only with the movement rule in Fig.\,\ref{fig:rule},
no robot leaves the current node in the configuration in Fig.\,\ref{fig:avoidStandstill}.
In this case, as a special behavior, 
if the label of the base node from robot $r_i$ is (3,1),
nodes (1,1), (2,0), and (1,-1) are robot nodes, and node (-1,1) is an empty node,
$r_i$ moves to the northwest adjacent node (-1,1) so that 
robot $r_j$ staying at $r_i$'s southeast adjacent node (1,-1) can move to the node 
where $r_i$ is currently staying.
When robots reach a configuration such that no robot leaves the current node,
the configuration is one solution of the gathering problem.	

\begin{figure}[t!]
	\begin{minipage}{0.48\hsize}
		\centering
		\includegraphics[scale=0.7]{{eps/avoidCollision}}
		\caption{An example to avoid a collision using ordinal numbers.}
		\label{fig:avoidCollision}
	\end{minipage}
	\begin{minipage}{0.01\hsize}
		\hspace{2mm}
	\end{minipage}
	\begin{minipage}{0.48\hsize}
		\centering
		\includegraphics[scale=0.7]{{eps/avoidCollision2}}
		\caption{An example to avoid a collision using $x$-elements.} 
		\label{fig:avoidCollision2}
	\end{minipage}
\end{figure}

\begin{figure*}[t!]
	\begin{minipage}{0.3\hsize}
		\centering
		\includegraphics[scale=0.69]{{eps/avoidStandstill}}
		\caption{An example to avoid a standstill.}
		\label{fig:avoidStandstill}
	\end{minipage}
	\begin{minipage}{0.01\hsize}
		\hspace{2mm}
	\end{minipage}
	\begin{minipage}{0.468\hsize}
		\centering
		\includegraphics[scale=0.68]{{eps/algoEX}}
		\caption{An example of the algorithm execution.}
		\label{fig:algoEX}
	\end{minipage}
\end{figure*}

\if()
\begin{figure*}[t!]
		\begin{center}
	\includegraphics[scale=0.7]{{eps/avoidCollision2}}
	\caption{An example to avoid a collision using $x$-elements of node labels.}
	\label{fig:avoidCollision2}
	\end{center}
\end{figure*}
\fi

\if()
\begin{figure*}[t!]
	\begin{center}
		\includegraphics[scale=0.69]{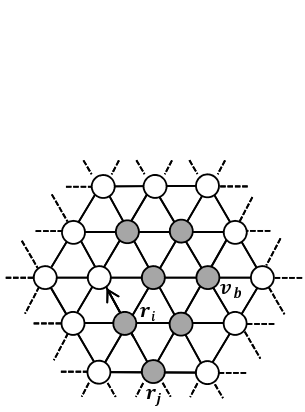}
		\caption{An example to avoid a standstill.}
		\label{fig:avoidStandstill}
	\end{center}
\end{figure*}
\begin{figure*}[t!]
	\begin{center}
		\includegraphics[scale=0.68]{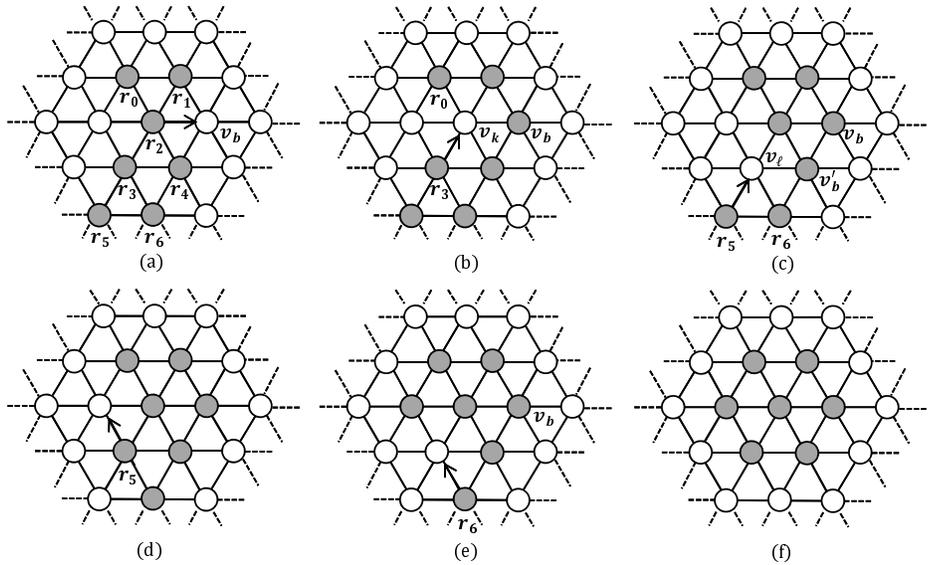}
		\caption{An example of the algorithm execution.}
		\label{fig:algoEX}
	\end{center}
\end{figure*}
\fi

An example of the algorithm execution is given in Fig.\,\ref{fig:algoEX}.
From (a) to (b), since $r_2$'s northeast and southeast adjacent nodes are robot nodes and 
no other robot node has larger $x$-element than that of the node where $r_2$ stays,
it moves to east adjacent node $v_b$.
From (b) to (c), robots $r_0$ and $r_3$ consider $v_b$ as the common base node and 
they try to move to the empty node $v_k$ with the smallest ordinal number
among candidate empty nodes.
In this case, since
the ordinal number of the node where  $r_3$ stays is larger than 
that of the node where   $r_0$ stays,
$r_3$ moves to $v_k$ and $r_0$ stays at the current node.
From (c) to (d), $r_5$ (resp., $r_6$) considers $v'_b$  (resp., $v_b$) as the base node and 
they try to move to node $v_\ell$.
In this case, since the $x$-element of the node that where  $r_5$  stays is smaller than 
that of the node where  $r_6$ stays,
$r_5$ moves to $v_\ell$ and $r_6$ stays at the current node.
From (d) to (e), as a special behavior, 
robot $r_5$ moves to the northwest adjacent robot node so that 
$r_6$ can move to the node where  $r_5$ is currently staying.
From (e) to (f), robot 	$r_6$ considers $v_b$ as the base node 
and it moves to the northeast adjacent node.
Then, robots achieve gathering.

The pseudocode of the proposed algorithm is described in Algorithm \ref{algo}.
In the following, we explain several robot behaviors 
that avoid a collision or an unconnected configuration.
The behavior of robot $r_i$ for the case that, 
the label of the base node is (2,0) but the node is an empty node, 
is described in lines 1 -- 3.
In this case, $r_i$ tries to move to node (2,0).
However, if 
$r_i$'s west adjacent node (-2,0) is a robot node and $r_i$ moves to the base node (2,0), 
the configuration may become unconnected (Fig.\,\ref{fig:algo1} (a)).
Hence, in this case $r_i$ moves to  node (2,0) when
$r_i$'s northwest or southwest adjacent node is also a robot node 
(Fig.\,\ref{fig:algo1} (b)).

The  behavior of robot $r_i$ for the case that 
the label of the base node is (4,0) is described in lines 5 --	 9.
In this case, if node (2,0) is an empty node, 
$r_i$ tries to move to the node.
However, if 
$r_i$'s southwest adjacent node (-1,-1) is a robot node and $r_i$ moves to node (2,0), 
the configuration may become unconnected (Fig.\,\ref{fig:algo2} (a)).
Hence, in this case $r_i$ moves to node (2,0) when 
its southeast adjacent node (1,-1) is also a robot node (Fig.\,\ref{fig:algo2} (b)).

\begin{figure}[t!]
	\centering
	\includegraphics[scale=0.68]{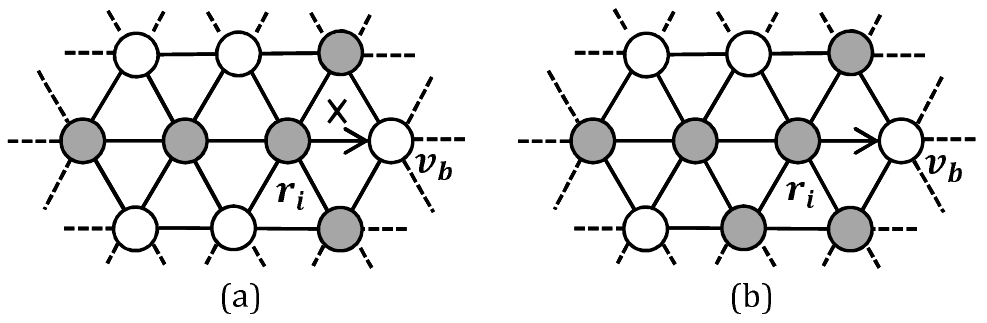}
	\caption{Behavior of robot $r_i$ when 
		the label of the base node $v_b$ is (2,0) but the node is an empty node.
	}
	\label{fig:algo1}
\end{figure}

\begin{figure}[t!]
	\centering
	\includegraphics[scale=0.68]{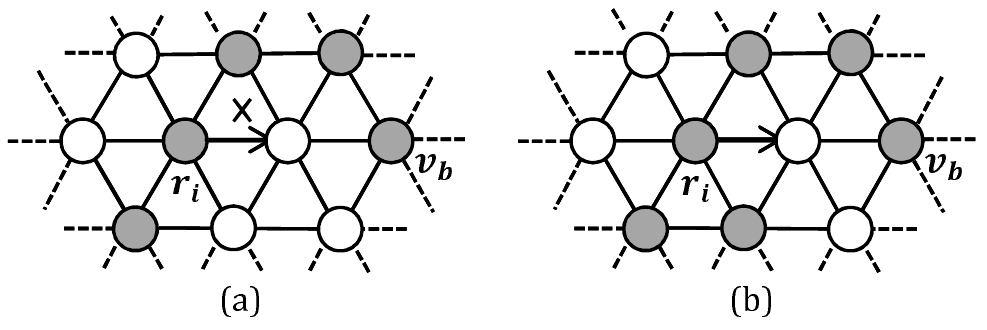}
	\caption{Behavior of robot $r_i$ when 
		the label of the base node $v_b$ is (4,0).		}
	\label{fig:algo2}
\end{figure}

\begin{figure}[t!]
	\centering
	\includegraphics[scale=0.9]{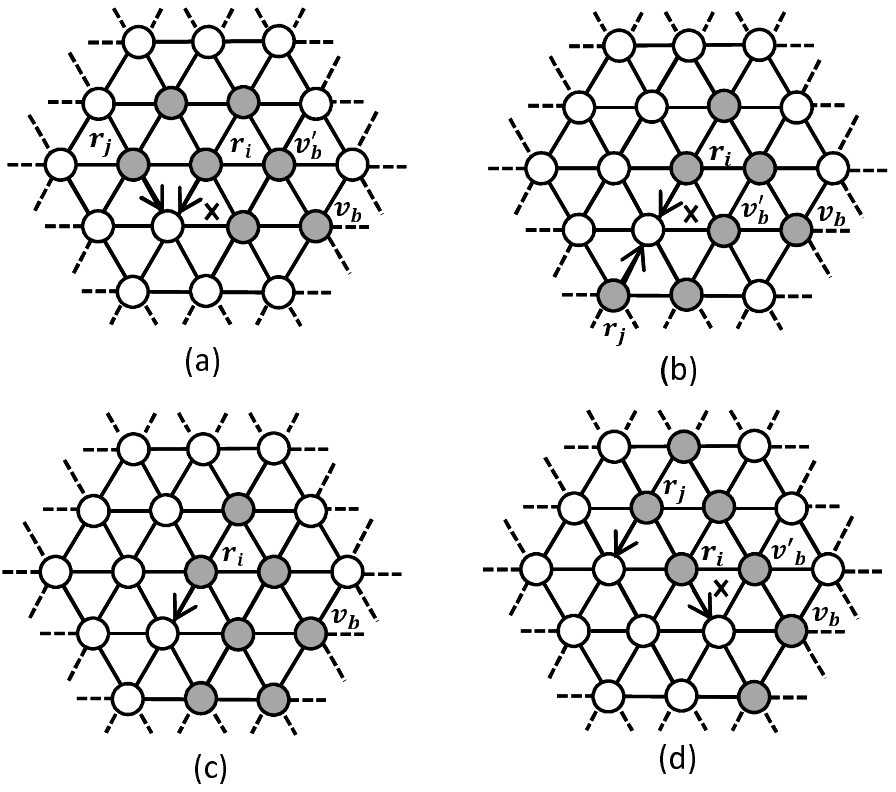}
	\caption{Behavior of robot $r_i$ when 
		the label of the base node $v_b$ is (3,-1)
		($v'_b$: the base node for robot $r_j$).}
	\label{fig:algo3}
\end{figure}

\begin{figure*}[t!]
	\centering
	\includegraphics[scale=0.75]{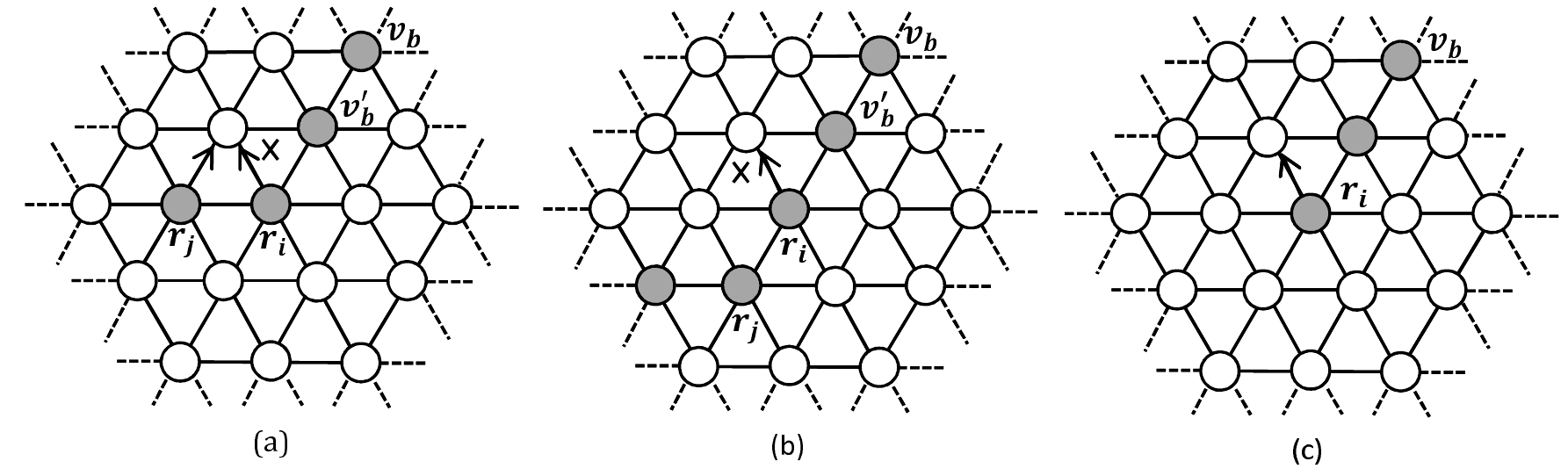}
	\caption{Behavior of robot $r_i$ when 
		the label of the base node $v_b$ is (2,2)
		($v'_b$: the base node for robot $r_j$).}
	\label{fig:algo4}
\end{figure*} 

The behavior of robot $r_i$ for the case that 
the label of the base node is (3,-1) is described in lines 11 -- 15.
In this case, if  nodes (2,0), (1,-1) and (1,1) are robot nodes and 
node (-1,-1) is an empty node, 
$r_i$ tries to move to its southwest adjacent node (-1,-1) so that 
the robot staying at node (1,1) could move to node (0,0) where 
$r_i$ is currently staying.
However, due to the limited visibility range, it is possible that 
$r_i$ and some robot $r_j$ consider different nodes as base nodes,
$r_j$ staying at node (-2,0) or (-2,-2) tries to move to node (-1,-1), and a collision occurs (Fig.\,\ref{fig:algo3} (a), (b)).
Hence, in this case 
$r_i$ moves to node (-1,-1) when 
nodes (-2,0), (-2,-2), and (-1,1) are empty nodes (Fig.\,\ref{fig:algo3} (c)).
In addition, if node (1,-1) is an empty node, 
$r_i$ tries to move to the node.
Then, it is possible that 
$r_i$ and some robot $r_j$ consider different nodes are base nodes,
$r_j$ staying at node (-1,1) tries to move to node (-2,0), and the configuration become unconnected
(Fig.\,\ref{fig:algo3} (d)).
Hence, in this case 
$r_i$ moves to node (1,-1) when 
node (0,-2) is an empty node.

\begin{algorithm*}[t!]                     
	\caption{Proposed algorithm}         
	\label{algo}   
	\begin{algorithmic}[1] 
		\small
		\IF{(node (2,0) is an empty node) $\land$ (nodes (1,1) and (1,-1) are robot nodes)
			$\land$ (the other robot nodes have $x$-elements of the labels at most 0)}
		\STATE /*The base node is (2,0) but it is am empty node*/
		\STATE \textbf{if} (node (-2,0) is an empty node) $\lor$ 
		((node (-2,0) is a robot node) $\land$ (node (-1,1) or (-1,-1) is a robot node))
		\textbf{then} move to the east adjacent node (2,0)
		\STATE
		
		\ELSIF{(node label of the base node is (4,0)) $\lor$ 
			((node (4,0) is an empty node) $\land$ 
			(nodes (3,1) and (3,-1) are robot nodes))}
		\STATE /*The base node is (4,0)*/				
		\STATE \textbf{if} (node (2,0) is an empty node) $\land$
		((nodes (-1,1), (-2,0), and (-1,-1) are empty nodes) $\lor$
		(node (1,-1) is a robot node and nodes (-2,0) and (-1,1) are empty nodes) $\lor$
		(node (1,1) is a robot node and nodes (-2,0) and (-1,-1) are empty nodes) $\lor$
		(nodes (1,-1), (-1,-1), and (-2,0) are robot nodes and 
		node (-1,1) is an empty node) $\lor$ 
		(nodes (-2,0), (-1,1) and (1,1) are robot nodes and node (-1,-1) is an empty node))
		\textbf{then} move to the east adjacent node (2,0)
		\STATE \textbf{else if} (node (2,0) is a robot node) $\land$
		(node (1,1) is an empty node) $\land$ 
		(nodes (-2,0) and (-1,1) are empty nodes) $\land$
		((nodes (-1,-1) and (2,2) are empty nodes) $\lor$
		(nodes (2,2), (3,1), (3,-1), and (-2,-2) are robot nodes))
		\textbf{then}  move to the northeast robot node (1,1)
		\STATE \textbf{else if} (nodes (2,0) and (1,1) are robot nodes) $\land$
		(nodes (1,-1) is an empty node) $\land$
		(nodes (-1,-1) (-2,0), (-1,1), and (2,-2) are empty nodes) $\land$
		((node (1,1) is a robot node) $\lor$ (node (2,2) is a robot node))
		\textbf{then} move to the southeast adjacent node (1,-1)
		
		\STATE
		\ELSIF{node label of the base node is (3,-1)}
		\STATE /*The base node is (3,-1)*/	
		\STATE \textbf{if} (node (1,-1) is an empty node) $\land$ 
		(nodes (-1,-1) and (0,-2) are empty nodes) $\land$ 
		((nodes (-2,0) and (-1,1) are empty nodes) $\lor$ 
		(nodes (-1,1) and (1,1) are robot nodes and node (0,2) is an empty node))
		\textbf{then} move to the southeast adjacent node (1,-1)
		\STATE \textbf{else if} (node (1,-1) is a robot node) $\land$
		(node (2,0) is an empty node) $\land$ 
		(node (-1,1) is an empty node) $\land$ 
		((node (-2,0) is an empty node) $\lor$ 
		(nodes (-2,0) and (-1,-1) are robot nodes))
		\textbf{then} move to the east adjacent node (2,0)
		\STATE \textbf{else if} (nodes (1,-1) and (2,0) are robot nodes) $\land$
		(node (1,1) is a robot node) $\land$
		(node (-1,-1) is an empty node) $\land$
		(nodes (-2,0) and (-2,-2) are  empty node) 
		\textbf{then} move to the southwest node (-1,-1)
		
		\STATE
		\ELSIF{node label of the base node is (2,-2)}
		\STATE /*The base node is (2,-2)*/	
		\STATE \textbf{if} (node (-1,-1) is an empty node) $\land$
		(nodes (-2,0), (-3,-1), and (-1,1) are empty nodes)
		\textbf{then} move to the southwest adjacent node (-1,-1)
		
		\STATE
		\ELSIF{node label of the base node is (3,1)}
		\STATE /*The base node is (3,1)*/			
		\STATE \textbf{if} (node (1,1) is an empty node) $\land$
		((nodes (-1,1), (-2,0), (-1,-1) are empty nodes) $\lor$
		(nodes (1,-1) and (-1,-1) are robot nodes and 
		nodes (0,-2) and (-1,1) are empty node))
		\textbf{then} move to the northeast adjacent node (1,1)
		\STATE \textbf{else if} (node (1,1) is a robot node) $\land$
		(node (2,0) is an empty node) $\land$
		((nodes (-2,0) and (-1,-1) are empty nodes) $\lor$
		(node (-1,-1) is an empty node and nodes (-2,0) and (-1,1) are robot nodes))
		\textbf{then} move to the east adjacent node (2,0)
		\STATE \textbf{else if} (nodes (1,1) and (2,0) are robot nodes) $\land$
		(node (1,-1) is a robot node) $\land$ (node (1,-1) is an empty node)
		$\land$ (nodes (-2,0), and (-2,2) are empty nodes)
		\textbf{then} move to the northwest adjacent node (-1,1)
		
		\STATE
		\ELSIF{node label of the base node is (2,2)}
		\STATE /*The base node is (2,2)*/			
		\STATE \textbf{if} (node (-1,1) is an empty node) $\land$ 
		(nodes (-3,1), (-2,0), and (-1,-1) are empty nodes) 
		\textbf{then} move to its northwest adjacent node (-1,1)  	
		\if()
		\STATE \textbf{else if} (nodes (2,0) is a robot node) $\land$
		(node (1,-1) is an empty node) $\land$ 
		(nodes (2,-2), (-1,-1), (-2,0) are empty nodes) $\land$
		((node (1,1) is a robot node) $\lor$
		(node (2,2) is a robot node))
		\textbf{then} move to the southeast adjacent node (1,-1)	
		\fi
		
		\STATE
		\ELSIF{(node label of the base node is (0,0) or (2,0) or (1,-1) or (1,1)) $\lor$
			(there is no base node)}
		\STATE /*Robot $r_i$ is close to the base node 
		and it does not need to leave the current node*/					
		\STATE stay at the current node 
		\ENDIF

	\end{algorithmic}
\end{algorithm*}

The behavior of robot $r_i$ for the case that 
the label of the base node is (2,2) is described in lines 27 -- 29.
In this case, if node (1,1) is a robot node and 
node (-1,1) is an empty node, 
it tries to move to node (-1,1).
However, due to the limited visibility range, it is possible that 
$r_i$ and some robot $r_j$ consider different nodes are base nodes,
$r_j$ staying at node (-2,0) tries to move to node (-1,1), and  a collision occurs (Fig.\,\ref{fig:algo4} (a)),
or $r_j$ staying at node (-1,-1) does not leave the current node 
and the configuration becomes unconnected (Fig.\,\ref{fig:algo4} (b)).
Hence, in this case 
$r_i$ moves to node (-1,-1) when
nodes (-2,0) and (-1,-1) are empty nodes (Fig.\,\ref{fig:algo4} (c)).

Although there still exist several robot behaviors that 
avoid a collision or an unconnected configuration, we omit the detail.

\subsection{Correctness}

The correctness of the proposed algorithm has been evaluated by computer simulations. 
By the simulations, we confirmed that robots which execute the proposed algorithm can achieve gathering from 
all possible connected initial configurations (3652 patterns in total) in the fully synchronous (FSYNC) model. 
Thus, we have the following theorem.

\begin{theorem}
For robots with visibility range 2,
the proposed algorithm solves the gathering problem from any connected initial configuration in the FSYNC model.
\end{theorem}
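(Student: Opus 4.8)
The plan is to decompose the theorem into three obligations that together imply correctness: \emph{safety} (no collision ever occurs), \emph{connectivity} (the robot-node subgraph stays connected), and \emph{progress} (the execution reaches the filled hexagon and then halts). Because the system is FSYNC, deterministic, and oblivious, the behavior of each robot depends only on its current view, so the dynamics factor through the quotient by translation; there are only finitely many connected shapes of seven robots on a triangular grid, hence the induced transition system on configurations-up-to-translation is a finite deterministic system. This finiteness is what ultimately makes the claim decidable, and it justifies reducing ``from any connected initial configuration'' to a statement about a finite graph of configurations whose unique sink must be the hexagon.

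First I would establish safety and connectivity by a local case analysis keyed on the label of the base node, following the branches of Algorithm~\ref{algo} and the movement rules of Fig.~\ref{fig:rule}. The clean case is when two robots that could collide share a common base node: they then agree on the coordinate frame and on the ordinal numbering of Fig.~\ref{fig:rule}(b), so the rule ``the robot at the node with the larger ordinal number moves'' resolves every contested target to a single mover (cf.\ Fig.~\ref{fig:avoidCollision}), while the symmetric ``smaller $x$-element wins'' rule handles two robots with a common target but distinct bases (cf.\ Fig.~\ref{fig:avoidCollision2}). The subtle part is that range~$2$ lets two robots pick \emph{different} base nodes; here I would verify that the emptiness preconditions guarding each move in Algorithm~\ref{algo} (the conditions on nodes such as $(-2,0),(-1,1),(-1,-1),(0,-2)$ in Figs.~\ref{fig:algo1}--\ref{fig:algo4}) are precisely strong enough that the worst compatible external configuration still yields neither a head-on swap, nor two robots entering one node, nor the severing of the last edge to a neighbor. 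Each branch reduces to checking a bounded neighborhood, so this step is finite but tedious.

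Next I would prove progress by exhibiting a potential $\Phi$ that is minimized exactly at the hexagon and strictly decreases along the execution. The most natural primary component is the configuration diameter $\max_{i,j} d(r_i,r_j)$ --- the very quantity the gathering problem minimizes, which equals $2$ precisely at the filled hexagon --- refined lexicographically by secondary compactness terms (for instance the number of distinct occupied $x$-elements together with a vertical-spread measure) so as to force a strict decrease whenever the diameter merely plateaus; all components are translation invariant. One then checks that no branch of Algorithm~\ref{algo} ever increases $\Phi$, and the crucial lemma is that $\Phi$ cannot remain constant forever: I must rule out the oscillating cycles that doomed the range~$1$ model (the alternation between Figs.~\ref{fig:case2-1} and~\ref{fig:case2-2} in the proof of Theorem~\ref{theo:insolvability}). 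This is exactly where the special ``anti-standstill'' move of Fig.~\ref{fig:avoidStandstill} matters: I would show it fires precisely in the configurations where the basic eastward rule would stall with $\Phi$ unchanged, and that firing it strictly decreases $\Phi$, ruling out both deadlock and livelock.

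The main obstacle is this progress argument in the presence of inconsistent base choices: proving that the interaction of the basic rule, the two tie-breakers, and the anti-standstill move \emph{globally} forbids every constant-$\Phi$ cycle, rather than merely handling each local move, is genuinely delicate, since a badly chosen $\Phi$ fails to decrease across exactly the special moves that exist to prevent stalling. For this reason the route actually taken is the exhaustive one permitted by finiteness: enumerate all $3652$ connected configurations up to translation, run the deterministic FSYNC execution of Algorithm~\ref{algo} from each, and verify mechanically that every run terminates in the hexagon with no collision and no disconnection. Since the reachable set of shapes is finite and the dynamics deterministic, this computer-assisted check constitutes a rigorous proof of the theorem, with the analytic potential argument above as the cleaner but substantially harder alternative.
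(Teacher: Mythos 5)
Your proposal ultimately rests on exactly the same argument as the paper: an exhaustive, computer-assisted simulation of the deterministic FSYNC execution from all $3652$ connected initial configurations up to translation, which is precisely the verification the authors present (they explicitly defer an analytic correctness proof to future work). The safety/connectivity/progress decomposition and the potential-function sketch you outline are the harder theoretical route the paper leaves open, but since you do not carry that route to completion, your delivered proof coincides with the paper's.
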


\section{Conclusion}\label{conclusion}
In this paper, we considered the gathering problem of seven autonomous mobile robots 
on triangular grid graphs. 
First, for robots with visibility range 1,
we showed that no collision-free algorithm exists for the gathering problem.
Next, for robots with visibility range 2,
we proposed a collision-free algorithm to solve the problem
from any connected initial configuration.
This algorithm is optimal in terms of visibility range.

There are four possible future works as follows. 
First, we will complete a theoretical proof of correctness for the proposed algorithm in Section \ref{sec:algo}.
\if()
Second, we will consider robots that agree only on chirality
(i.e., do not agree on the direction and orientation of the $x$-axis).
In this case, we conjecture that
robots cannot solve the problem even with visibility range 2 and 
they need more visibility range.
\fi
Second, we will consider the relaxed version of connected initial configuration such that 
the visibility relationship among robots constitutes one connected graph.
Third, we will consider gathering for different number of robots.
Lastly, we consider other problems such as the pattern formation problem 
for autonomous mobile robots on triangular grids.

\if()
Another future work is to solve the problem 
for robots with visibility range 2 in the semi-synchronous (SSYNC) model in which,
in each cycle, a subset of the robots are activated 
and they execute the cycle synchronously.
We conjecture that
the proposed algorithm in this paper can solve the problem also in the SSYNC model,
and we will complete the proof of correctness.
\fi

\section*{Acknowledgement}
This work was partially supported by JSPS KAKENHI Grant Number 18K18029, 18K18031, 19K11823, 20H04140, and 20KK0232;
the Hibi Science Foundation; and Foundation of Public Interest of Tatematsu.

\bibliographystyle{unsrt}
\bibliography{ref}

\begin{thebibliography}{10}

\bibitem{YamashitaSensei}
I.~Suzuki and M.~Yamashita.
\newblock Distributed anonymous mobile robots: Formation of geometric patterns.
\newblock {\em {\rm SIAM Journal on Computing}}, 28(4):1347--1363, 1999.

\bibitem{courtieuipl}
P.~Courtieu, L.~Rieg, S.~Tixeuil, and X.~Urbain.
\newblock Impossibility of gathering, a certification.
\newblock {\em {\rm Information Processing Letters}}, 115(3):447--452, 2015.

\bibitem{continuous1}
Y.~Dieudonn{\'e} and F.~Petit.
\newblock Self-stabilizing gathering with strong multiplicity detection.
\newblock {\em {\rm Theoretical Computer Science}}, 428:47--57, 2012.

\bibitem{gridGathering}
G.~D'Angelo, G.~Di~Stefano, R.~Klasing, and A.~Navarra.
\newblock Gathering of robots on anonymous grids and trees without multiplicity
  detection.
\newblock {\em {\rm Theoretical Computer Science}}, 610:158--168, 2016.

\bibitem{gridGathering2}
J.~Castenow, M.~Fischer, J.~Harbig, D.~Jung, and FM. auf~der Heide.
\newblock Gathering anonymous, oblivious robots on a grid.
\newblock {\em {\rm Theoretical Computer Science}}, 815:289--309, 2020.

\bibitem{descrete1}
R.~Klasing, A.~Kosowski, and A.~Navarra.
\newblock Taking advantage of symmetries: Gathering of many asynchronous
  oblivious robots on a ring.
\newblock {\em {\rm Theoretical Computer Science}}, 411(34-36):3235--3246,
  2010.

\bibitem{descrete2}
R.~Klasing, E.~Markou, and A.~Pelc.
\newblock Gathering asynchronous oblivious mobile robots in a ring.
\newblock {\em {\rm Theoretical Computer Science}}, 390(1):27--39, 2008.

\bibitem{descrete3}
G.~D'Angelo, G.~Di~Stefano, and A.~Navarra.
\newblock Gathering on rings under the look--compute--move model.
\newblock {\em {\rm Distributed Computing}}, 27(4):255--285, 2014.

\bibitem{ringGatheringMoves}
G.~Di~Stefano and A.~Navarra.
\newblock Optimal gathering of oblivious robots in anonymous graphs and its
  application on trees and rings.
\newblock {\em {\rm Distributed Computing}}, 30(2):75--86, 2017.

\bibitem{fat1}
J.~Czyzowicz, L.~Gasieniec, and A.~Pelc.
\newblock Gathering few fat mobile robots in the plane.
\newblock {\em \rm{Theoretical Computer Science}}, 410(6-7):481--499, 2009.

\bibitem{fat2}
C.~Agathangelou, C.~Georgiou, and M.~Mavronicolas.
\newblock A distributed algorithm for gathering many fat mobile robots in the
  plane.
\newblock {\em \rm{Proc. PODC}}, pages 250--259, 2013.

\bibitem{fatGrid}
Y.~Ito, Y.~Katayama, and K.~Wada.
\newblock A gathering problem for nobile fat robots in a grid without the
  global coordinate system.
\newblock {\em \rm{Technical Report of IEICE (COMP2014-10)}}, 114:53--59, 2014.

\bibitem{amoeFirst}
Z.~Derakhshandeh, S.~Dolev, R.~Gmyr, AW. Richa, C.~Scheideler, and
  T.~Strothmann.
\newblock Brief announcement: amoebot--a new model for programmable matter.
\newblock In {\em {\rm Proc. SPAA}}, pages 220--222. ACM, 2014.

\bibitem{amoeLeader2}
J.~J Daymude, R.~Gmyr, A.~W Richa, C.~Scheideler, and T.~Strothmann.
\newblock Improved leader election for self-organizing programmable matter.
\newblock {\em \rm{ALGOSENSORS}}, pages 127--140, 2017.

\bibitem{amoeGathering}
S.~Cannon, JJ. Daymude, D.~Randall, and AW. Richa.
\newblock A markov chain algorithm for compression in self-organizing particle
  systems.
\newblock {\em \rm{Proc. PODC}}, pages 279--288, 2016.

\bibitem{YamauchiSensei}
G.~A. Di~Luna, P.~Flocchini, N.~Santoro, G.~Viglietta, and Y.~Yamauchi.
\newblock Shape formation by programmable particles.
\newblock {\em \rm {Distributed Computing}}, 33(1):69--101, 2020.

\bibitem{shape2}
Z.~Derakhshandeh, R.~Gmyr, AW. Richa, C.~Scheideler, and T.~Strothmann.
\newblock Universal shape formation for programmable matter.
\newblock {\em \rm{Proc. PODC}}, pages 289--299, 2016.

\end{thebibliography}

\end{document}